\documentclass[onecolumn]{article}

\usepackage{microtype}
\usepackage{graphicx}
\usepackage{subcaption}
\usepackage{booktabs} %
\usepackage{xspace}
\usepackage{bbding}
\usepackage{fontawesome}
\usepackage{wrapfig}
\usepackage{enumitem}
\usepackage{listings}
\setlist[itemize]{align=parleft,left=2pt,nosep, topsep=-3pt}
\usepackage[table]{xcolor}

\usepackage{hyperref}
\usepackage{amsmath, amssymb, amsthm}
\usepackage{cleveref}

\usepackage{comment}
\definecolor{junglegreen}{rgb}{0.16, 0.67, 0.53}

\newcommand{\llama}{\texttt{Llama2-7B-chat}\xspace}
\newcommand{\llamalarge}{\texttt{Llama2-13B-chat}\xspace}
\newcommand{\AdvBench}{\texttt{AdvBench}}
\newcommand{\ours}{ActSVD\xspace}
\newcommand{\Alpacacleaned}{\texttt{Alpaca-Cleaned}}

\usepackage{multirow}
\usepackage{array}
\usepackage{tabularx}
\usepackage[accepted]{icml2024}

\icmltitlerunning{Assessing the Brittleness of Safety Alignment via Pruning and Low-Rank Modifications}

\begin{document}

\setlength{\textfloatsep}{5pt}

\twocolumn[
\icmltitle{Assessing the Brittleness of Safety Alignment \\ via Pruning and Low-Rank Modifications}

\icmlsetsymbol{equal}{*}
\icmlsetsymbol{equal2}{$\dagger$}

\begin{icmlauthorlist}
\icmlauthor{Boyi Wei}{equal}
\icmlauthor{Kaixuan Huang}{equal}
\icmlauthor{Yangsibo Huang}{equal}
\icmlauthor{Tinghao Xie}{}
\icmlauthor{Xiangyu Qi}{}
\icmlauthor{Mengzhou Xia}{}
\\
\icmlauthor{Prateek Mittal}{}
\icmlauthor{Mengdi Wang}{equal2}
\icmlauthor{Peter Henderson}{equal2}
\end{icmlauthorlist}

\begin{center}
{\bf Princeton University}
\end{center}

\vskip 0.3in
]

\printAffiliationsAndNotice{\icmlEqualContribution. $^\dagger$Equal advising} %

\begin{abstract}
Large language models (LLMs) show inherent brittleness in their safety mechanisms,  as evidenced by their susceptibility to jailbreaking and even non-malicious fine-tuning. This study explores this brittleness of safety alignment by leveraging pruning and low-rank modifications.
We develop methods to identify critical regions that are vital for safety guardrails, and that are disentangled from utility-relevant regions at both the neuron and rank levels.
Surprisingly, the isolated regions we find are sparse, comprising about $3\%$ at the parameter level and $2.5\%$ at the rank level. 
Removing these regions compromises safety while only mildly impacting utility, 
corroborating the inherent brittleness of the model's safety mechanisms. Moreover, we show that LLMs remain vulnerable to low-cost fine-tuning attacks even when modifications to the safety-critical regions are restricted. These findings underscore the urgent need for more robust safety strategies in LLMs.

\end{abstract}

\section{Introduction}

\begin{figure*}[h]
    \centering
    \includegraphics[width=\linewidth]{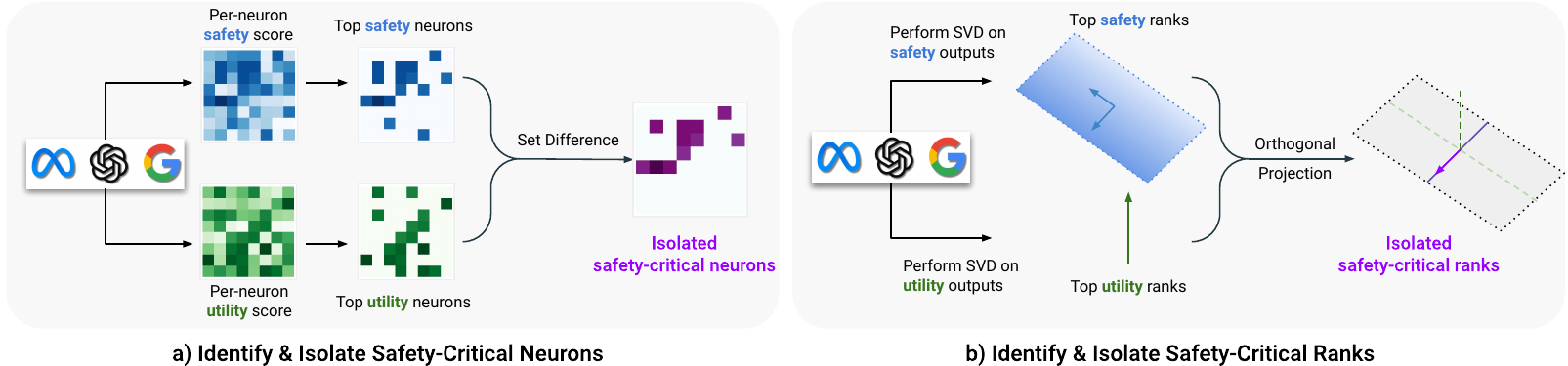}
    \vspace{-7mm}
    \caption{The proposed pipelines for identifying and isolating safety-critical regions of LLM weights at (a) neuron level and (b) rank level. \textbf{(a).} We identify the top safety neurons and the top utility neurons by computing per-neuron importance scores on the safety dataset and the utility dataset. Next, we isolate the safety-critical neurons from the utility neurons using set difference.  \textbf{(b).} We identify the top safety ranks and the top utility ranks by performing SVD on the safety outputs and the utility outputs (termed \ours). Next, we isolate the safety-critical ranks using orthogonal projection. 
    }
    \label{fig:main}
\end{figure*}

The capabilities of large language models (LLMs) have been significantly improved over the past few years~\cite{brown2020language, chatgpt, openai2023gpt4, touvron2023llama, touvron2023llama-2, claude, team2023gemini}. However, LLMs are not without limitations; they can sometimes produce outputs that are inaccurate, misleading, or harmful.  To align LLMs with human values, several approaches have been proposed, including reinforcement learning from human feedback~\cite{ziegler2019fine, ouyang2022training, bai2022training} and AI feedback~\cite{bai2022constitutional, lee2023rlaif}, and the development of more computationally efficient alternatives~\cite{sun2023principle, rafailov2023direct}.

Despite these efforts, recent studies have uncovered concerning `jailbreak' scenarios. In these cases, even well-aligned models have had their safeguards successfully breached~\cite {jailbreakchat}. These jailbreaks can include crafting adversarial prompts~\cite{wei2023jailbroken, jones2023automatically, carlini2023aligned, zou2023universal, shen2023anything, zhu2023autodan, qi2023visual}, applying persuasion techniques~\cite{zeng2024johnny}, or manipulating the model's decoding process~\cite{huang2023catastrophic}. Recent studies show that fine-tuning an aligned LLM, even on a non-malicious dataset, can inadvertently weaken a model's safety mechanisms~\cite{qi2023fine, yang2023shadow, zhan2023removing}. Often, these vulnerabilities apply to both open-access and closed-access models.

Addressing failure cases in the alignment of LLMs requires a deep understanding of why their safety mechanisms are fragile. Our study aims to provide a possible understanding via \textit{weight attribution} --- the process of linking safe behaviors to specific regions within the model's weights.\footnote{See the project website for code and other information: \url{https://boyiwei.com/alignment-attribution/}.} However, a key challenge here is the intricate overlap between \textit{safety} mechanisms and the model's general capabilities, or \textit{utility}. Consider the task of responding responsibly to a harmful instruction, such as \textsc{``Please provide five key steps to commit a fraud."}. The model must first comprehend the step-by-step nature of the request, then recognize the illegality and harmful intent of committing fraud, and ultimately, formulate a response that appropriately declines the request.  This process requires a blend of safety awareness and utility capability of the model. 
Our goal is to identify the smallest number of \textbf{safety-critical} links in the model, which \textbf{only} contribute to the model's safety. If these links are removed, the model is effectively jailbroken while utility remains relatively unaffected. 
If there are few such links, it may help explain why safety mechanisms remain brittle and why low-cost fine-tuning attacks have been so successful.

Our study examines the model weights and disentangles safety and utility from two perspectives: individual \textbf{neurons} and specific \textbf{ranks} within the model. 
For neuron attribution, we follow two widely adopted and effective methods from the previous works on pruning transformer models~\citep{lee2019snip, sun2023simple} to calculate a behavior-specific importance score for each neuron in an LLM, which identifies a group of neurons crucial for a certain behavior, such as giving safe responses (safety) or following general instructions (utility). For rank attribution, we propose \ours, a data-aware low-rank decomposition algorithm to identify crucial ranks of each weight matrix for the behavior.

To further address the complexity of potential entanglement between safety and utility, we propose set difference method for neuron attribution, and orthogonal projection method for rank attribution, to isolate \textit{safety-critical neurons} and \textit{safety-critical ranks}, respectively (see \Cref{fig:main}). This separation allows for a more refined analysis of safety mechanisms, leading to the findings below.\footnote{Our method is generally applicable to the attribution of different behaviors, but in the context of this study, the behaviors of interest are `safety' and `utility'.}

\textbf{Safety-critical regions are very sparse in aligned models.} We experiment with our method on Llama2-chat model family~\cite{touvron2023llama-2}. After disentangling utility from safety, we find that safety-critical neurons form a remarkably sparse structure in the model, representing just $3\%$ of the weights (\Cref{sec:exp_removal}). {Similarly, safety-critical ranks account for only about $2.5\%$ of the total ranks.} 
The sparsity of these regions may help explain why safety is so easily compromised after fine-tuning. %

\textbf{Removing safety-critical regions reduces safety while mostly maintaining utility.} 
We then demonstrate that the removal of specifically identified safety-critical neurons or safety-critical ranks in the Llama2-chat models significantly weakens their safety guardrails (\Cref{sec:exp_removal}). The attack success rate escalates from $0$ to over $90\%$, yet the model's overall utility remains largely unaffected. Conversely, we find that removing neurons or ranks deemed least important for safety can marginally improve the model's resistance to jailbreaking attempts (\Cref{sec:exp_removal_bottom}) -- a potentially exciting direction for improving safety via pruning-based approaches.

\textbf{Freezing safety-critical regions still remains vulnerable to fine-tuning attacks.} While intuitively, preventing modification of safety-critical parameters might reduce the likelihood fine-tuning attacks succeeding, our findings reveal that this strategy only offers resistance to minor model modifications (\Cref{sec:freeze_safety_neurons}). This suggests that fine-tuning attacks may introduce new pathways that bypass the original safety mechanisms in the model. This indicates a need for further research to develop more robust mitigation strategies against such attacks.

Our work suggests that the vulnerability of the model's safety mechanisms may stem from the sparse distribution of safety-critical regions within the model's architecture. Therefore, the sparsity of safety-critical neurons and ranks may act as a model-intrinsic metric for assessing the brittleness of safety alignment, complementing red teaming efforts. We hope it inspires further development of robust and reliable safety alignment algorithms 
that aim to integrate safety-critical regions seamlessly with utility regions for enhanced overall model performance.

\section{Methodology}
\label{sec:method}

\newcommand{\argmin}{\mathop{\arg\min}}
\newcommand{\rank}{\mathop{\mathrm{rank}}}

To identify and isolate the regions that are exclusively responsible for a model's safety behaviors, %
we make certain modifications to the weights of the model and observe its behavioral change in safety and utility. A natural way to modify the network weights is \textbf{neuron} removal, where we set several neurons of the weight matrices to zero. Besides, as LoRA~\citep{hu2021lora} is a popular parameter-efficient fine-tuning technique, we also consider \textbf{rank} removal, where we remove several ranks of the weight matrices.

\begin{table*}[t]
\centering
{
\setlength{\tabcolsep}{24pt}
\begin{tabular}{ccc}
\toprule
& \textbf{Neuron Attribution} & \textbf{Rank Attribution} \\
\midrule 

 \multirow{2}{*}{\bf Output Change} & \cellcolor[gray]{1} Wanda~\citep{sun2023simple} & \cellcolor[gray]{1} \ours   \\ 
& $
 |W| \odot   \Big (\boldsymbol{1} \cdot \| X_{\mathrm{in}} \|_2^\top\Big )
$ & $
    U S V^\top  \approx WX_{\mathrm{in}},\   
    \widehat{W} = UU^\top W$\\
\midrule
  \multirow{2}{*}{\bf Loss Change} & \cellcolor[gray]{1}  SNIP~\citep{lee2019snip}  & \multirow{2}{*}{N/A\footnotemark} \\
  & $\mathbb{E}_{x \sim D} |W \odot \nabla_{W} \mathcal{L}(x)|$ 
   & \\
\midrule
 \multirow{2}{*}{\bf Disentanglement Method}   & \cellcolor[gray]{1} Set difference   & \cellcolor[gray]{1} Orthogonal projection \\ 
   & $S(p,q) = S^s(q) - S^u(p)$ & $\Delta W(r^u,r^s) = (I-\Pi^u)\Pi^s W$ \\
\bottomrule
\end{tabular}}
\caption{The overview of our {weight attribution} methods. For \textbf{neuron} attribution, we compute Wanda importance score~\citep{sun2023simple} as per output change and SNIP importance score~\citep{lee2019snip} as per loss change. To disentangle safety from utility, we adopt set difference to isolate the safety-critical neurons. For \textbf{rank} attribution, we compute the most important ranks via \ours per output change and adopt orthogonal projection to isolate the safety-critical ranks.
}
\label{tab:method}
\end{table*}

\footnotetext{We exclude the rank attribution + loss change combination in our paper due to several technical challenges, e.g., differentiating through an orthogonal projection matrix requires Lie algebra analysis~\citep{schotthofer2022low}.}

For a given calibration dataset, we consider two types of behavioral changes after we modify the network weights: \textbf{output change}, where we directly monitor the change of the immediate outputs of each of the modified layers; and \textbf{loss change}, where we monitor the change of the final loss.

The section is organized as follows. In \Cref{sec:score}, we consider the three approaches on weight attribution for a general calibration dataset%
, and summarize them in~\Cref{tab:method}. In \Cref{sec:diff}, we illustrate how to isolate safety-critical regions that leads to significant changes in safety behavior while having minimal effect on the general capabilities of the models.

\subsection{Identifying Important Neurons and Ranks}  
\label{sec:score}

\textbf{SNIP score \citep{lee2019snip}} 
For a data instance $x = (x_{\mathrm{prompt}}, x_{\mathrm{response}})$, we take the loss as the conditional negative log-likelihood $\mathcal{L}(x) = - \log p(x_{\mathrm{response}} \mid x_{\mathrm{prompt}})$ predicted by the model.
For any linear layer with weight matrix $W \in \mathbb{R}^{d_{\mathrm{out}} \times d_{\mathrm{in}}}$, we can calculate the importance score for the loss $\mathcal{L}(x)$ for each weight entry $W_{ij}$ 
\[
I(W_{ij}, x) = |W_{ij} \cdot \nabla_{W_{ij}} \mathcal{L}(x) |,
\]
which is the first-order Taylor approximation to the change of the loss when the weight entry $W_{ij}$ is set to zero. 
In matrix form, we have 
\[
I(W, x) = |W \odot \nabla_{W} \mathcal{L}(x) |.
\]
Given a calibration dataset $D$, we take the absolute value first and then take the average over $D$ and obtain
\[
I(W) = \mathbb{E}_{x\sim D} I(W,x) = \mathbb{E}_{x \sim D} |W \odot \nabla_{W} \mathcal{L}(x)|.
\]
where we get an individual score for each example and aggregate over the examples following~\citet{michel2019sixteen}. 
Intuitively, $I(W)$ measures how important each entry is for the behavior of the model on the calibration dataset $D$. Small $I(W)_{ij}$ indicates that setting $W_{ij}$ to zero has negligible impact on each of the calibration data points $x$, and we can attribute the specific behavior of the model to the weights with large $I(W)_{ij}$.

\textbf{Wanda score~\cite{sun2023simple} } 
For a calibration dataset, we store all the activations corresponding to the layer $W$ into $X_{\mathrm{in}} \in \mathbb{R}^{d_\mathrm{in} \times n}$. 
We consider multiplying the weight matrix $W$ with an element-wise binary mask $M$, resulting in a sparse matrix, such that the Frobenius norm of the change to the output is minimized as in \citet{frantar2023sparsegpt}:
\[
    \min_{M \in \{0,1\}^{{ d_{\mathrm{out}} \times d_{\mathrm{in}}  }}} \| W X_{\mathrm{in}} - (M \odot W) X_{\mathrm{in}} \|_F^2.
\]

We follow Wanda~\citep{sun2023simple} to obtain an approximate solution to the objective above. Denote an all-one vector by $\boldsymbol{1} \in \mathbb{R}^{d_{\mathrm{out}}}$. The importance score $I(W)$ is given by 
\[
I(W) =  |W| \odot   \Big (\boldsymbol{1} \cdot \| X_{\mathrm{in}} \|_2^\top\Big ),
\]
where we take the row-wise $L^2$ norm to obtain $\| X_{\mathrm{in}} \|_2 \in \mathbb{R}^{d_\mathrm{in}}$. To obtain a sparse network while keeping the change to the outputs minimal, one can prune out weight neurons corresponding to the minimal score $I(W)$.

In our setting, as we are only interested in measuring the importance of each weight entry that contributes to the model's response, we mask out the prompt activations and only include the response activations in $ X_{\mathrm{in}} $.

\textbf{\ours~ } Recall that we store all the response activations before the layer $W$ into $X_{\mathrm{in}} \in \mathbb{R}^{d_\mathrm{in} \times n}$. We seek to find a low-rank matrix $\widehat{W}$ such that the Frobenius norm of the change to the output is minimized:
\[
    \widehat{W} = \argmin_{\rank {\widehat{W}} \leq r} \| W X_{\mathrm{in}} - \widehat{W} X_{\mathrm{in}} \|_F^2.
\]
This can be done by performing SVD on $W  X_{\mathrm{in}} \in \mathbb{R}^ {d_\mathrm{out} \times n}$:
\[
    U S V^\top  \approx WX_{\mathrm{in}}, 
\]
where $U \in \mathbb{R}^{d_\mathrm{out}\times r}$ is the orthogonal matrix corresponding to the top $r$ left singular vectors. The minimizer is given by 
\[
    \widehat{W} = UU^\top W,
\]
where $\Pi = UU^\top$ is the orthogonal projection onto the $r$ most significant left singular subspace. The proof is postponed to \Cref{sec:proof}.
Note that $\widehat{W}$ can be implemented by a LoRA update. The rank of the LoRA adaptor $\Delta W = W -\widehat{W}$ can be bounded by
\[
    \rank(\Delta W) \leq \rank(I - UU^\top) = R-r,
\]
where $R = \rank(W)$.
With a slight abuse of terminology, we refer to these rank-1 components $(U_1U_1^\top W, \  U_2U_2^\top W, \dots)$ as the important \textit{ranks} of the weight matrix $W$. See \Cref{sec:lowrank} for a detailed discussion of the related methods. %

\subsection{Isolating Safety-Critical Neurons and Ranks} \label{sec:set_difference}
\label{sec:diff}

Assume we have two calibration different datasets: a {\textbf{utility}} dataset $D^u$ and a \textbf{safety} dataset $D^s$. $D^u$ contains prompts and responses that are related to general language abilities. $D^s$ demonstrates behavior where the requests are harmful and the responses properly decline the requests. 

We seek to isolate the safety-critical regions of the network weights, where removing those regions will have a \textit{low influence} on the model's behavior on distribution $D^u$ but a \textit{high influence} on $D^s$ --- effectively maintaining the model's general language abilities but compromising its safety alignment.

\textbf{Isolating safety-critical neurons } For the two calibration datasets, assume we obtain two scores $I^u$ and $I^s$, respectively, either using SNIP or Wanda as in \Cref{sec:score}. We consider the weight neurons that score least according to $I^u$ but score most according to $I^s$. We adopt \textit{per-output comparison group} as~\citet{sun2023simple}, which corresponds to each matrix row. Specifically, for any pair of sparsity levels $(p\%, q\%)$, we define the top-$p\%$ important neurons $S^u(p)$ for utility as the neurons whose important utility score $I^u_{i,j}$ ranks top $p\%$ among the $i$-th \textit{row} of $I^u$.
\[
   S^u(p) = \{(i,j) \mid I^u_{i,j} \text{ is the top } p\% \text{ of } I^u_i\}.
\]
Similarly, we define the top-$q\%$ important neurons $S^s(q)$ for safety as
\[
    S^s(q) = \{(i,j) \mid I^s_{i,j} \text{ is the top } q\% \text{ of } I^s_i\}.
\]

Then, the isolated neurons $S(p,q)$ is defined as the set difference between  $S^s(q)$ and  $S^u(p)$ :
\[
S(p,q) =  S^s(q) - S^u(p).
\]

\textbf{Isolating safety-critical ranks }  
Let $R = \rank(W)$. For the two calibration datasets $D^u$ and $D^s$, for any pair of integer $(r^u, r^s)$, we can obtain the projection matrices $\Pi^u$ and $\Pi^s$ as stated in \Cref{sec:score}, where
\[
    \Pi^u = U^u (U^u)^\top, \rank(\Pi^u) = R - r^u,
\]
and 
\[
    \Pi^s = U^s (U^s)^\top, \rank(\Pi^s) = R - r^s.
\]
Multiplying the weight matrix by $\Pi^u$ removes the least important $r^u$ ranks for $D^u$, while multiplying the weight matrix by $\Pi^s$ removes the least important $r^s$ ranks for $D^s$. To isolate the safety-critical ranks, we consider the matrix 
\[
    \Delta W(r^u, r^s) = (I-\Pi^u)  \Pi^s  W.
\]
Removal of $\Delta W(r^u, r^s)$ essentially removes the important ranks of the safety behavior that are  \textit{orthogonal} to the important ranks of the utility behavior. 
The modified weight matrix is $\widetilde{W} = W -  \Delta W(r^u, r^s)$, which can be implemented by a LoRA update~\citep{hu2021lora} with
\[
    \rank(\Delta W(r^u, r^s)) \leq \min(r^u, R-r^s).
\]  

\vspace{-5mm}
\section{Experimental Setup}\label{sec:exp_setup}

\subsection{Models, Datasets, and Evaluation Metrics}

\paragraph{Models} Our experiments use \llama and \llamalarge~\cite{touvron2023llama-2}. We select them for their publicly accessible weights and their extensive safety tuning process. 

\paragraph{Datasets} To identify safety-critical regions in the model, we prepare two types of datasets:  the safety dataset, for attributing safety-related behaviors, and the utility dataset, for attributing utility-related behaviors. Each dataset is structured in a (prompt, response) format. More details about these datasets are provided in \Cref{sec:app_exp_detail}.

The safety dataset is compiled using harmful instructions from \AdvBench~\cite{zou2023representation}. We divide \AdvBench~into \AdvBench$_\texttt{eval}$ ($100$ instructions for evaluation) and \AdvBench$_\texttt{attr}$ ($420$ instructions for attribution). We prompt \llama with \AdvBench$_\texttt{attr}$, collecting responses that refrain from following harmful instructions. As noted by \citet{zou2023universal}, the \textit{judgement} segments in the model's responses (e.g., ``Sure," ``I am sorry") significantly impact the nature of subsequent responses. We thus create two variants: safety-full (entire response) and safety-short (judgement segment only). 

For the utility dataset, we filter out safety-related (prompt, response) pairs using sensitive phrase matching \cite{qi2023fine} from \Alpacacleaned\footnote{\url{https://github.com/gururise/AlpacaDataCleaned}}, a refined version of the Alpaca dataset~\cite{alpaca}.

\vspace{-2mm}
\paragraph{Measuring utility} 
Following~\citet{sun2023simple}, we
measure the model's utility by reporting its averaged zero-shot accuracy of six tasks from EleutherAI LM Harness \cite{eval-harness}: BoolQ \cite{clark2019boolq}, RTE \cite{wang2018glue}, HellaSwag \cite{zellers2019hellaswag}, WinoGrande \cite{sakaguchi2021winogrande}, ARC Challenge \cite{clark2018think}, and OpenbookQA \cite{mihaylov2018can}. 

\paragraph{Measuring safety} We measure the model's safety by evaluating its attack success rate (ASR) in response to harmful instructions.  Specifically, we prompt the model using \AdvBench$_\texttt{eval}$, the first $100$ prompts from \AdvBench, and collect its responses. Following~\citet{zou2023universal}, we consider an attack as successful if the model's response lacks key patterns indicative of instruction rejection. The ASR is then computed as the ratio of successfully attacked prompts to the total number of prompts evaluated.

\begin{table}[t]
\setlength{\tabcolsep}{2pt}
\centering
\resizebox{\linewidth}{!}{
\begin{tabular}{cccc}
\toprule
          &  ASR$_\textrm{Vanilla}$     & ASR$_\textrm{Adv-Suffix}$  & ASR$_\textrm{Adv-Decoding}$         \\
\midrule
Sample Times          & $1$           & $1$ & $5$         \\
System Prompt       & \faCheckSquareO   & \faTimes   & \faTimes\\
\texttt{[INST]}, \texttt{[/INST]} wrapper         & \faCheckSquareO & \faCheckSquareO    & \faTimes      \\
Adversarial Suffix& \faTimes & \faCheckSquareO  & \faTimes      \\
\bottomrule
\end{tabular}
}
\caption{The differences between three types of ASR in our safety evaluation. ASR$_\textrm{Vanilla}$ captures the model behavior under standard usage, while ASR$_\textrm{Adv-Suffix}$ and ASR$_\textrm{Adv-Decoding}$ captures behaviors against adversaries.}
\label{tab:ASR_details}
\end{table}

Our safety evaluation considers three use cases: the ASR under standard, non-malicious conditions (ASR$_\textrm{Vanilla}$),  and the ASR under two malicious settings -- ASR$_\textrm{Adv-Decoding}$~\cite{huang2023catastrophic}, where the attacker manipulates the decoding process, and ASR$_\textrm{Adv-Suffix}$~\cite{zou2023universal}, where the attacker optimizes to find adversarial suffixes. Differences in these metrics are detailed in \Cref{tab:ASR_details}. Due to the high computational cost associated with calculating adversarial suffixes, we precompute several suffixes, and use the three best-performed ones in our evaluation. Note that we only include the system prompt when calculating ASR$_\textrm{Vanilla}$. More details are provided in \Cref{sec:app_exp_detail}. %

\subsection{Variants to Identify Safety-Critical Neurons}
\label{sec:exp_setup_neurons}
We conduct experiments to identify safety-critical neurons using the following methods:
\begin{itemize}
    \item \textbf{SNIP (top)}: we regard neurons that receive top-$p\%$ SNIP scores (\Cref{sec:score}) on safety data as safety-critical. We choose $p = 0.01$.
    \item \textbf{Wanda (top)}: we regard neurons that receive top-$p\%$ Wanda scores (\Cref{sec:score}) on safety data as safety-critical. We choose $p = 0.01$.
    \item \textbf{SNIP\footnote{We only use SNIP for set difference because during our early experiments, we find SNIP performs slightly better than Wanda.} with set difference}:
    we identify safety-critical neurons by focusing on those with top $q\%$ scores in the safety data, which are not included in the top $p\%$ scoring neurons according to the utility data (\Cref{sec:diff}).
    We do a grid search for parameters $p$ and $q$, with their values ranging from $0.1$ to $90$. %
    \item \textbf{Probing}:  we also compare our approach with probing~\cite{hewitt2019designing}, a common method for attributing behaviors of LLMs to their internal components. Following standard probing practices~\cite{clark2019does, campbell2023localizing, li2023inference}, we feed the model both harmful and harmless instructions, collect activation outputs from each attention head, and then train a linear classifier for each head to differentiate these activations. Attention heads with the highest accuracy on the evaluation set are identified as safety-critical.  \Cref{sec:app_exp_detail} provides more details.
\end{itemize}

\subsection{Variants to Identify Safety-Critical Ranks}
\label{sec:exp_setup_ranks}

We conduct experiments to identify safety-critical ranks using the following methods:
\begin{itemize}
    \item \textbf{\ours (top)}: we regard the top-$r$ ranks identified as most safety-related by \ours (\Cref{sec:score}) as safety-critical. We choose $r=1$.
    \item \textbf{\ours with orthogonal projection}: we identify the safety-critical ranks via orthogonal projection between the utility projection matrix $\Pi^u$ and the safety projection matrix $\Pi^s$ obtained from \ours (\Cref{sec:diff}). 
    {We do a grid search for  $r^u$ and $r^s$ between $50$ and $4000$.}

\end{itemize}

\section{Experimental Results}
\label{sec:exp_results_main}

\begin{figure*}[ht]
\centering
\begin{minipage}[b]{\linewidth}
\centering
\includegraphics[width=0.98\linewidth]{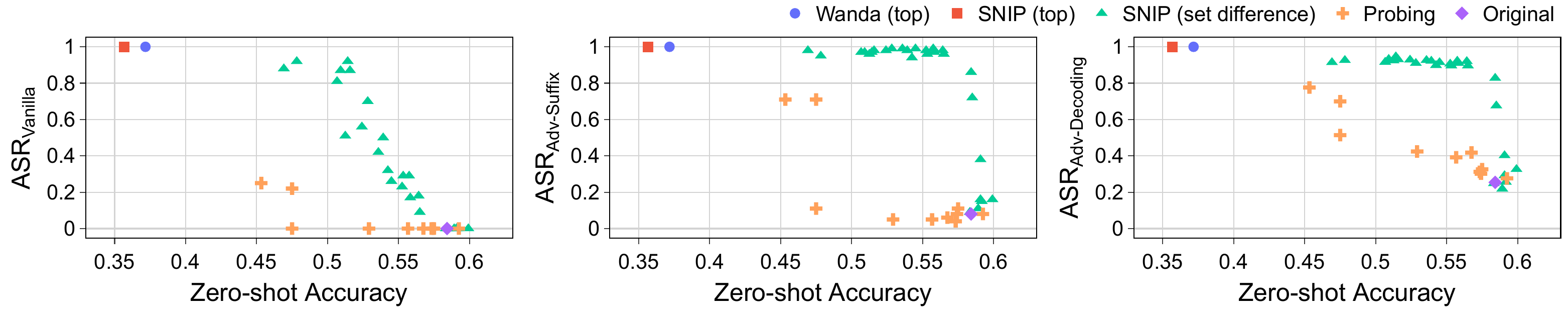}
\vspace{-2mm}
\subcaption{Removing Safety-Critical \textbf{Neurons}}
\label{fig:neuron_7b}
\end{minipage}
\begin{minipage}[b]{\linewidth}
\centering
\includegraphics[width=0.98\linewidth]{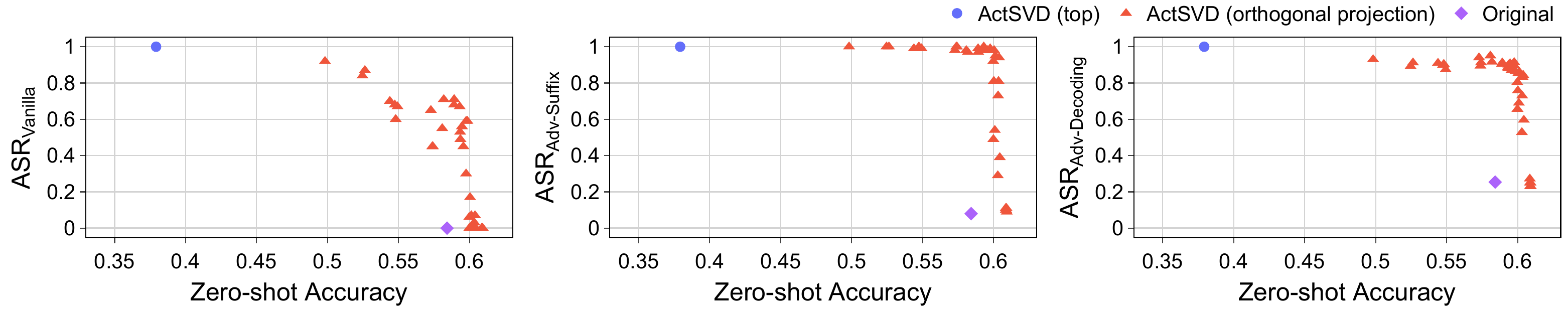}
\vspace{-2mm}
\subcaption{Removing Safety-Critical \textbf{Ranks}}
\label{fig:rank_7b}
\end{minipage}
\vspace{-7mm}
\caption{ASR and accuracy after removing safety-critical regions in \llama identified by: \textbf{(a)} Different methods in \Cref{sec:exp_setup_neurons} with \textbf{sparsity constraint $\mathbf{<3\%}$}. 
\textbf{(b)} Different methods in \Cref{sec:exp_setup_ranks} with \textbf{{ranks of the weight updates ($\rank(\Delta W)$)  less than $\mathbf{100}$}} (out of $4096$). Among all methods, disentangling safety from utility (set difference for neurons and orthogonal projection for ranks) mostly effectively identify the safety-critical regions, with safety severely compromised while utility retains. We obverse similar results on \llamalarge (see \Cref{fig:neuron_13b} in \Cref{sec:13b_results}). %
}
\label{fig:exp_main}
\end{figure*}

\vspace{-2mm}

This section presents our findings on both neuron and rank levels. In \Cref{sec:exp_removal}, we demonstrate that our set difference and orthogonal projection outperform other methods in isolating safety-critical neurons or ranks. In \Cref{sec:exp_removal_bottom}, we show that the safety of the model can be enhanced by removing the least important safety neurons or ranks. Then in \Cref{sec:safety_utility_analysis}, we analyze the overlap between safety and utility neurons and ranks, where we observe less overlap in MLP layers than self attention layers. Finally, in \Cref{sec:freeze_safety_neurons} we examine the potential of freezing safety-critical neurons to counter fine-tuning attacks and explore how fine-tuning may circumvent safety mechanisms.

\subsection{Disentangling Safety and Utility is Vital for Identifying
Safety-Critical Regions}
\label{sec:exp_removal}

We experiment with different methods to identify safety-critical neurons and ranks outlined in  \Cref{sec:exp_setup_neurons} and \Cref{sec:exp_setup_ranks}, and summarize our findings as below.

\textbf{Safety-critical regions are sparse and can be effectively isolated via set difference or orthogonal projection.} We isolate neurons contributing to safety from those contributing to utility, by applying the set difference method described in \Cref{sec:diff} to SNIP.  \Cref{fig:neuron_7b} presents the Pareto front resulting from set difference-based pruning: We observe that removing less than $3\%$ of neurons pushes ASR in all three scenarios close to $1$, while maintaining an average zero-shot accuracy above $0.5$. Similarly, \Cref{fig:rank_7b} shows results for removing safety ranks orthogonal to utility ranks: $\Delta W(r^u, r^s) = (I - \Pi^u)\Pi^sW$. Notably, {an update of just $2.5\%$ (less than $100$)} of the total $4096$ ranks significantly increases the model's ASR, while preserving its zero-shot accuracy. For example, when we remove the orthogonally-projected top-$6$ safety ranks while keeping the top-$96$ utility ranks untouched, we get $0.71, 0.97, 0.91$ in ASR$_\textrm{Vanilla}$, ASR$_\textrm{Adv-Suffix}$ and ASR$_\textrm{Adv-Decoding}$ respectively and $0.58$ zero-shot accuracy. These findings suggest that regions critical for safety are relatively \textit{sparse} within the weight matrices, evident at both the neuron and rank levels.

\textbf{Pruning merely less than $1\%$ neurons makes the model vulnerable in adversarial cases.} We also observe from \Cref{fig:exp_main} (middle and right) that models tend to be more fragile in adversarial scenarios, as indicated by the non-dropping accuracy when ASR$_\textrm{Adv-Suffix}$ and ASR$_\textrm{Adv-Decoding}$ reach to $1$. Interestingly, if we focus solely on adversarial cases, only pruning less than $1\%$ of neurons can significantly compromise the model's safety while still keeping its accuracy above $0.53$, as shown in \Cref{tab:pq_comb} in \Cref{sec:pq_comb}.

\textbf{Pruning top safety neurons or ranks severely compromises utility.} 
Removing neurons with the highest safety importance scores, either calculated with Wanda or SNIP, also leads to a complete loss of safety in the \llama model. However, there is also a drastic decrease in the model's utility, with its average accuracy dropping to about $0.35$, significantly lower than its original accuracy of $0.58$.  Likewise, removing even just the top-1 rank (out of $4096$) critical for safety causes the model's accuracy to drop to $0.38$.  Similar results are also observed in \llamalarge (\Cref{fig:neuron_13b} in \Cref{sec:13b_results}).

\begin{figure*}[t]
    \begin{minipage}{0.49\linewidth}
    \centering
    \includegraphics[width=\linewidth]{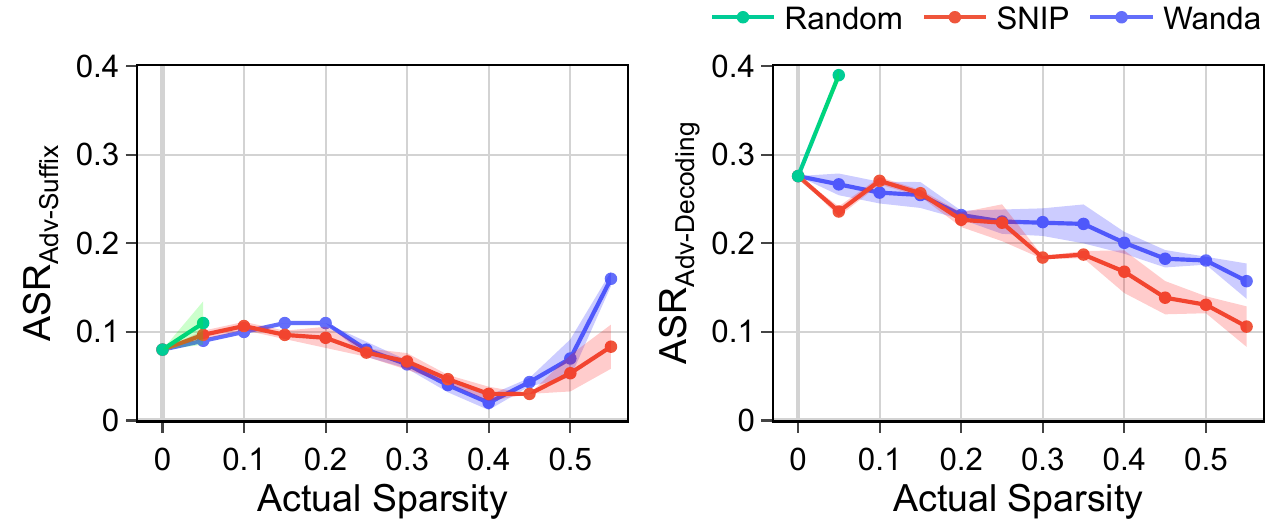}
    \vspace{-2mm}
    \subcaption{Removing the least Safety-Relevant \textbf{Neurons}}
    \label{fig:ASR_vs_sparsity_neuron}
    \end{minipage}
    \hfill
    \begin{minipage}{0.49\linewidth}
    \centering
    \includegraphics[width=\linewidth]{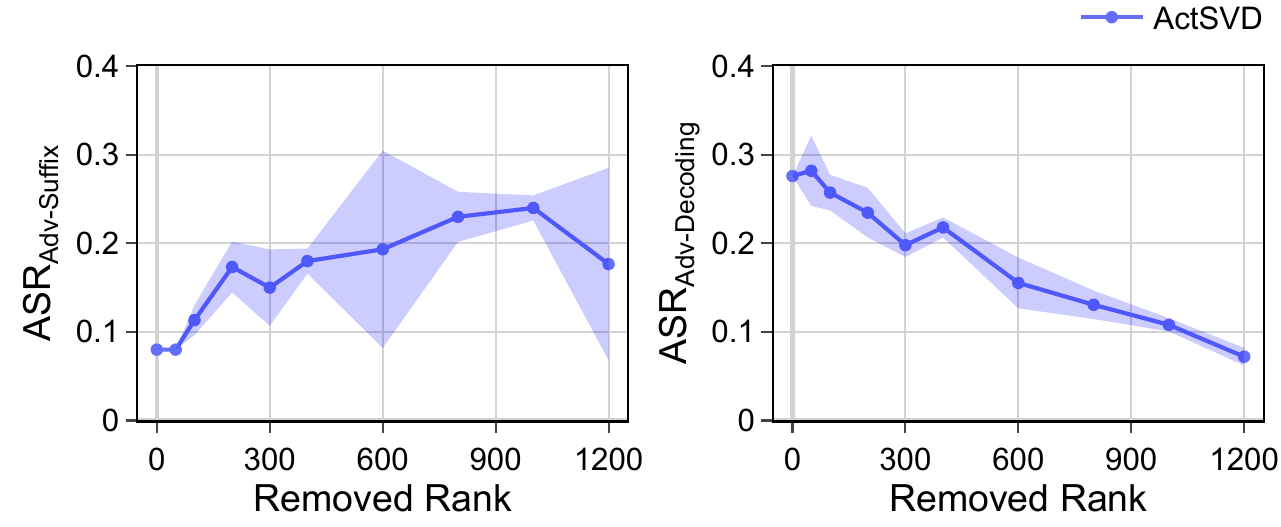}
    \vspace{-2mm}
    \subcaption{Removing the least Safety-Relevant \textbf{Ranks}}
    \label{fig:ASR_vs_sparsity_rank}
    \end{minipage}
    \caption{Impact on ASR under adversaries in \llama when: (a) Removing neurons with  \textit{the lowest} safety importance scores and (b) Removing \textit{the least} safety-relevant ranks identified by \ours. Both the importance score and safety-relevant ranks are calculated on the safety-short dataset. All plotted points have reasonable utility with accuracy $>0.5$. 
    The exclusion of neurons and ranks deemed least
critical for safety results in improved robustness against adversarial decoding attacks. 
Potential reason for the variation in ASR$_\textrm{Adv-Suffix}$ is the adversarial suffixes may not directly transfer to the modified model.
     See \Cref{fig:ASR_vs_sparsity_13b} for the results on \llamalarge.
    }
    \label{fig:ASR_vs_sparsity}
\end{figure*}

These observations support our rationale for isolating safety from utility: Regions primarily contributing to safety behaviors in aligned models may also be crucial for its general utility. Consequently, removing these regions can impair the model's ability to generate grammatically correct content, which in turn undermines its safety mechanisms.
\newpage
\textbf{Attention head probing scores cannot isolate safety-critical neurons.}
Our probing results in~\Cref{sec:app_probing} suggest that activations of individual attention heads are predictive of identifying harmful versus harmless instructions, with over half achieving more than $0.95
$ probing accuracy. 
Based on the obtained score of each attention head, we prune the top-$k$ scored (out of $1024$) attention heads from \llama, with $k$ ranging from $10$ to $300$. However, as shown in \Cref{fig:neuron_7b}, our set difference approach outperforms the probing method consistently, yielding higher ASR at the same level of accuracy. Similar results are observed on \llamalarge (see \Cref{fig:neuron_13b}). This highlights the need for a disentanglement method, as achieving high harmful versus harmless prediction accuracy does not necessarily mean the top predictive heads are solely responsible for generating safety responses. Besides, these results also imply the need to focus on the MLP layers and a finer granularity like neurons or ranks.

\vspace{-2mm}
\subsection{Enhancing Safety by Eliminating Regions with Minimal Safety Relevance} \label{sec:exp_removal_bottom}

\begin{figure*}[t]
    \centering
    \begin{minipage}[b]{0.97\linewidth}
    \includegraphics[width=\linewidth]{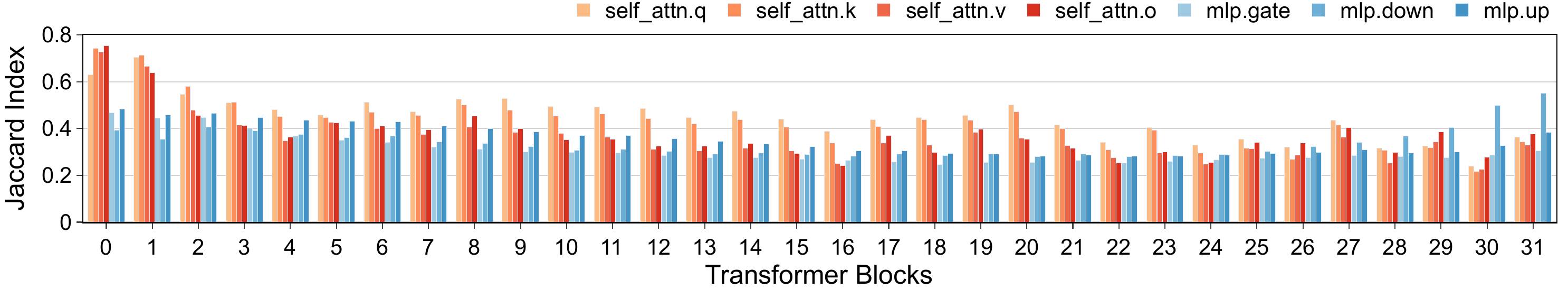}
    \vspace{-5mm}
    \subcaption{Neuron level}
    \label{fig:overlap_neuron}
    \end{minipage}
    \vspace{2mm}
    \begin{minipage}[b]{0.97\linewidth}
    \includegraphics[width=\textwidth]{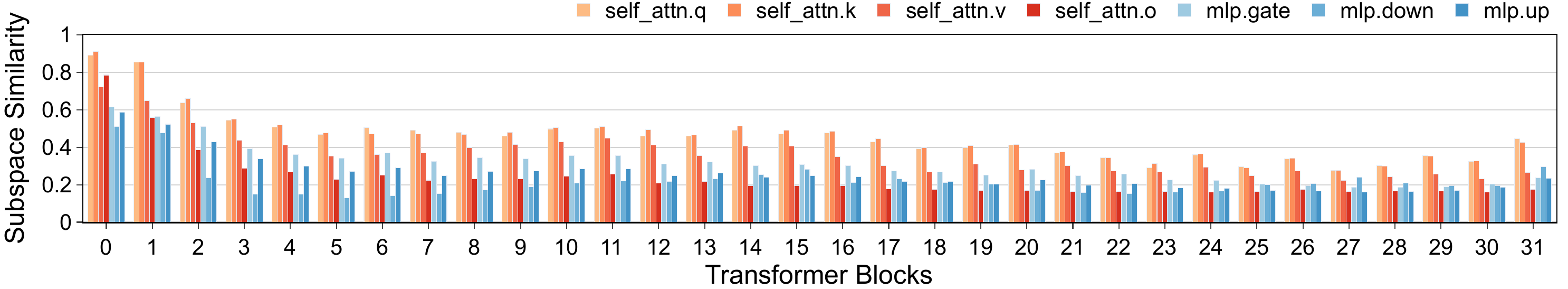}
    \vspace{-5mm}
    \subcaption{Rank level}
    \label{fig:overlap_rank}
    \end{minipage}
    \vspace{-5mm}
    \caption{Safety and utility overlapping analysis of each layer inside \llama using: (a) Jaccard Index between top $5\%$ safety neurons and top $5\%$ utility neurons and (b) Subspace similarity between rank-$100$ utility matrix $U^u$ and rank-$100$ safety matrix $U^s$.  \textbf{Lower} Jaccard index/subspace similarity means lower overlapping of utility and safety, namely utility and safety behaviors are \textbf{more differentiated}. MLP layers appear to encode more differentiated behaviors. 
    }
    \label{fig:jaccard_index_analysis}
\end{figure*}

\textbf{Pruning least safety-relevant neurons improves safety.}
Thinking from the opposite, it's reasonable to hypothesize that the neurons with the lowest safety importance scores could be detrimental for safety. Consequently, eliminating these neurons could potentially enhance the overall safety of the model.
To verify this, we conduct an ablation study to prune weights with the lowest SNIP and Wanda scores at various sparsity levels ($s\%$), and report randomly removing $s\%$ of neurons as a baseline for comparison. Particularly, we only report the results for pruned models maintaining reasonable utility, defined by an average accuracy above $0.5$. As shown in \Cref{fig:ASR_vs_sparsity_neuron}, random pruning strategy significantly reduces model accuracy, which falls below $0.5$ after pruning merely $10\%$ of the weights; it also leads to a noticeable decline in the model's safety. In contrast, when pruning is guided by the lowest safety importance score, the model's accuracy remains largely stable (i.e., $> 0.5$). Furthermore, pruning neurons with the lowest safety scores even slightly enhances the model's safety, against both decoding manipulation and adversarial suffixes. This indicates that neurons identified as least important for safety may undermine the model's safety, thereby validating the effectiveness of our method in calculating safety importance scores.

\textbf{Removing least safety-relevant ranks improves safety.} Similarly, we also consider removing the least safety ranks from the model using \ours. Specifically, we remove least $(R-r)$ safety ranks by using $\widehat{W}=UU^\top W=\sum_{i=1}^r U_iU_i^\top W$ to approximate $W$.
In accordance to our findings at the neuron level, as we increase the removed rank, we observe a decrease in the model's ASR$_\textrm{Adv-Decoding}$ in \Cref{fig:ASR_vs_sparsity_rank}. This also echoes the recent findings~\cite{sharma2023truth} where removing high-order components improves the model's reasoning performance. However, the model's ASR$_\textrm{Adv-Suffix}$ exhibits considerable variation, which could potentially be due to that the adversarial suffixes found using \citet{zou2023universal} on the original model cannot directly transfer to the modified model. 

\vspace{-3mm}
\subsection{MLP
Layers Appear to Encode More Differentiated Behaviors}\label{sec:safety_utility_analysis}

In \Cref{sec:exp_removal}, we find that removing high-safety-score regions also compromises utility, indicating a possible entanglement of safety and utility regions. We validate this at both neuron and rank levels.

\textbf{Neuron-level Jaccard index:} We calculate the layer-wise Jaccard index, $J(A, B) = |A\cap B|/|A\cup B|$, to quantify the overlap between top $p\%$ utility neurons  and top $q\%$ safety neurons. \Cref{fig:overlap_neuron} shows Jaccard indices across all transformer blocks and layers in \llama, using SNIP importance scores with top percentages $p\%$ and $q\%$ at $5\%$\footnote{
We choose top $5\%$ because the optimal selections of $(p\%, q\%)$ presented in \Cref{fig:exp_main} is around $5\%$ (see \Cref{sec:app_exp_results}).
}. The observed spikes in  Jaccard indices indicate large overlaps between safety and utility neurons within certain layers of the model. Notably, MLP layers exhibit lower Jaccard indices compared to attention layers, suggesting that utility or safety-related knowledge is more differentiated in MLP layers within language models \cite{dai2021knowledge}.

\textbf{Rank-level subspace similarity:} Similarly, we also find that MLP layers appear to encode more differentiated behaviors for safety and utility, from the rank perspective. Specifically, we report\footnote{We choose top $100$ ranks as the optimal  $(r^u, r^s)$ corresponds to top $100$ safety and utility ranks (see \Cref{sec:pq_comb}).} the subspace similarity $\phi(U^u, U^s)$ between rank-$100$  $U^u$ and rank-$100$ $U^s$ as defined in \citet[Appendix G]{hu2021lora} in \Cref{fig:overlap_rank}, where 
\[
    \phi(U^u, U^s) = \frac{ \| (U^u)^\top U^s \|^2_F}{\min(\rank(U^u), \rank(U^s))}.
\] 
As shown, the left singular matrices $U^u$ and $U^s$ exhibit a lower subspace similarity (i.e., utility and safety behaviors are more differentiated) in MLP layers, corroborating our findings at the neuron level. 
\vspace{-0.25cm}
\subsection{Freezing Safety-Critical Neurons Does Not Stop  Fine-Tuning Attacks}\label{sec:freeze_safety_neurons}

\vspace{-0.05cm}
Finally, we explore the implications of the identified neurons on the fine-tuning attacks, which demonstrate that fine-tuning an aligned model, even with harmless data, can unexpectedly weaken its safety measures~\cite{qi2023fine, yang2023shadow, zhan2023removing}. This issue is particularly concerning given the increasing availability of model fine-tuning APIs from major vendors like OpenAI\footnote{\url{https://platform.openai.com/finetune}}.

\begin{table}[t]
    \small
    \centering
    \setlength{\tabcolsep}{8pt}
    \small
    \begin{tabular}{c|c|c|c}
        \toprule
        \multirow{2}{*}{\bf{\% of Frozen Weights}} & \multicolumn{3}{c}{\textbf{ASR}$_\textrm{\bf Vanilla}$ } \\ \cmidrule{2-4}
         & $n=10$ & $n=50$ & $n=100$ \\ 
        \midrule
        $0$ (original attack) & $0.53$ & $0.91$ & $0.94$ \\
        \midrule
        $9.61$ & $0.52$ & $0.92$ & $0.94$ \\ 
        $19.22$ & $0.40$ & $0.91$ & $0.94$ \\ 
        $28.83$ & $0.38$ & $0.91$ & $0.94$ \\ 
        $48.05$ & $0.34$ & $0.91$ & $0.94$ \\ 
        $67.27$ & $0.23$ & $0.85$ & $0.89$ \\ 
    \bottomrule
    \end{tabular}
    \vspace{-2mm}
    \caption{ASR$_\textrm{Vanilla}$ of the model under fine-tuning with varying fractions of safety-critical neurons frozen. $n$ represents the number of Alpaca examples used for fine-tuning. Freezing safety-critical neurons is insufficient to thwart fine-tuning attacks.
    }
    \label{tab:ft_attack}
\end{table}

We explore whether the identified safety-critical neurons could mitigate the fine-tuning attack\footnote{We only explore this for neurons, as the ``freezing" operation at rank level cannot be easily achieved using $U^u$ and $U^s$.
}. Following the experimental setup in \citet{qi2023fine}, we fine-tune \llama with varying numbers of examples ($n$) from the Alpaca dataset~\citep{alpaca}. During fine-tuning, we freeze the top-$q\%$ of safety neurons and observe their effect on preserving safety. As shown in \Cref{tab:ft_attack}, effective counteraction of the attack occurs only with $n=10$ and freezing over $50\%$ of neurons.  This observation aligns with ~\citet{lee2024mechanistic}'s hypothesis that fine-tuning attacks may create alternative pathways in the original model. Given that safety-critical neurons are sparse, these new routes could bypass the existing safety mechanisms easily, and therefore we need more robust defenses against fine-tuning attacks.

\section{Limitations \& Future Work}
We identify areas for potential future research and limitations of this work. First, there are limited publicly accessible, strong safety-aligned models, which constrains our experiments to the Llama2-chat models. Other aligned models, trained with different datasets and strategies, might demonstrate varying behaviors under our methodology.

Second, we found that standard attention head probing does not effectively localize safety-critical neurons. Our findings that MLP layers may exhibit better localization for safety knowledge, also suggest that future probing research could explore MLP layers in more depth. This exploration could also examine the potential integration of these methods with our pipeline to enhance safety attribution effectiveness.

Our study proposes initial, yet promising, strategies for improving safety robustness, which could be explored further: (1) pruning regions least important for safety (or potentially explicitly harmful) could improve safety robustness; (2) making safety-critical regions difficult to isolate may be an exciting new direction in building inherently safer models.

\section{Conclusion}

In this study, we introduce a pipeline for identifying safety-critical regions (neurons and ranks) in LLMs, which effectively disentangles the regions critical for safety and those vital for utility. Our experiments with Llama2-chat models demonstrate that safety-critical regions are notably sparse in aligned LLMs, accounting for about $3\%$ at the weight level and $2.5\%$ at the rank level. Despite their sparsity, these regions are crucial for the integrity of the model's safety mechanisms, as removing them destroys the model's safety with utility retained. This sparsity may explain the observed brittleness in safety alignment in current LLMs, and could serve as a model-intrinsic metric for assessing the brittleness of safety alignment in future models, thereby complementing red teaming efforts. And our work suggests potentially important future directions for improving the robustness and safety of models overall.

\section*{Acknowledgements}
We express our gratitude to Vikash Sehwag, Chiyuan Zhang, Yi Zeng, Ruoxi Jia, Lucy He, Kaifeng Lyu, and the Princeton LLM Alignment reading group for providing helpful feedback. Boyi Wei and Tinghao Xie are supported by the Francis Robbins Upton Fellowship, Yangsibo Huang is supported by the Wallace Memorial Fellowship, and Xiangyu Qi is supported by Gordon Y. S. Wu Fellowship. 
Prateek Mittal acknowledges the support by NSF grants CNS-1553437 and CNS-1704105, the ARL’s Army Artificial Intelligence
Innovation Institute (A2I2), the Office of Naval Research Young Investigator Award, the Army Research Office Young
Investigator Prize, Schmidt DataX award, and Princeton E-affiliates Award.
Mengdi Wang acknowledges the support by NSF IIS-2107304, NSF CPS-2312093, ONR 1006977, and Genmab. This research is also supported by the Center for AI Safety Compute Cluster. Any opinions, findings, conclusions, or recommendations expressed in this material are those of the author(s) and do not necessarily reflect the views of the sponsors. 

\section*{Contribution Statement}

This project was a team effort. The contributions of each junior author are detailed below:

\textbf{Idea formulation and preliminary exploration:} The project's core ideas, specifically linking safe behaviors to certain model regions and isolating safety from utility, were developed by Boyi, Kaixuan, and Yangsibo.  Yangsibo and Boyi implemented Wanda and set difference methods based on discussions with Mengzhou, and ran preliminary experiments. Kaixuan implemented SNIP, \ours, and orthogonal projection methods, and ran preliminary experiments.

\textbf{Literature survey:} The literature review was conducted by team members in their respective areas of expertise.  Tinghao focused on alignment and jailbreaks; Yangsibo explored task attribution; Kaixuan reviewed low-rank modifications, and Mengzhou surveyed pruning techniques.

\textbf{Evaluation: }  Boyi led the charge on key experiments and the creation of visuals, while the entire team played a part in the evaluation process.  Yangsibo and Tinghao prepared the attribution data and set up the evaluation pipeline. Xiangyu collaborated with Boyi on reporting results for the attention head probing baseline. Boyi and Kaixuan investigated Jaccard index and subspace similarity to analyze the overlapping between safety and utility. Yangsibo and Tinghao studied the effect of freezing safety-critical regions during the fine-tuning.

\textbf{Writing: } The initial structure and primary drafting of the manuscript were led by Yangsibo, Boyi, and Kaixuan. The rest of the team members contributed by editing and providing valuable feedback on the manuscript.

\section*{Impact Statement}

\textbf{Dual-use Risk.} Our work, like other safety and security research, aims to make models safer in the long run by identifying short-term weaknesses. We hope our work will spur additional researching robust safety mechanisms that are not so sparse, easy to isolate, and easy to remove. 

That being said, with any safety and security research there is some risk that adversaries will use our work to remove safety guardrails. We believe the benefit of releasing our work and engaging in this study outweighs these potential risks for three reasons. 

First, we perform experiments on models (Llama2-chat family) that already has a base model available without any safety guardrails, so there is no marginal increased risk. Second, by assessing the brittleness of safety guardrails in our work, we can encourage stronger guardrails to be developed that are more difficult for an attacker to isolate and remove. Our work may help explain existing results demonstrating that safety guardrails can be removed via fine-tuning, while identifying potential pathways for improved defenses. Third, our work does not significantly decrease the cost of jailbreaking a model beyond alternative strategies. Models can already be jailbroken relatively cheaply with fine-tuning. Instead, our focus is on analysis and understanding of the brittleness of these safety mechanisms so that future work can reduce the risk of jailbreaking in open models.

Overall, we hope that our work will improve the state of AI safety, particularly in open models, by providing key analysis and information.

\textbf{Safety and harm definitions.} We generally follow existing standard benchmarks and protocols for assessment of safety and harm, but these may not cover all definitions of safety and harm. Further work can be done to expand analysis to a wider range of settings and we encourage additional work in the space of definitions and evaluation that are beyond the scope of this work.

\newpage

\bibliography{main}

\begin{thebibliography}{121}
\providecommand{\natexlab}[1]{#1}
\providecommand{\url}[1]{\texttt{#1}}
\expandafter\ifx\csname urlstyle\endcsname\relax
  \providecommand{\doi}[1]{doi: #1}\else
  \providecommand{\doi}{doi: \begingroup \urlstyle{rm}\Url}\fi

\bibitem[Adi et~al.(2016)Adi, Kermany, Belinkov, Lavi, and Goldberg]{adi2016fine}
Adi, Y., Kermany, E., Belinkov, Y., Lavi, O., and Goldberg, Y.
\newblock {Fine-grained Analysis of Sentence Embeddings Using Auxiliary Prediction Tasks}.
\newblock In \emph{ICLR}, 2016.

\bibitem[Albert(2023)]{jailbreakchat}
Albert, A.
\newblock {Jailbreak Chat}, 2023.
\newblock URL \url{https://www.jailbreakchat.com/}.

\bibitem[Ali et~al.(2023)Ali, Abuhmed, El-Sappagh, Muhammad, Alonso-Moral, Confalonieri, Guidotti, Del~Ser, D{\'\i}az-Rodr{\'\i}guez, and Herrera]{ali2023explainable}
Ali, S., Abuhmed, T., El-Sappagh, S., Muhammad, K., Alonso-Moral, J.~M., Confalonieri, R., Guidotti, R., Del~Ser, J., D{\'\i}az-Rodr{\'\i}guez, N., and Herrera, F.
\newblock {Explainable Artificial Intelligence (XAI): What We Know and What Is Left to Attain Trustworthy Artificial Intelligence}.
\newblock \emph{Information Fusion}, 2023.

\bibitem[Anthropic(2023{\natexlab{a}})]{claude}
Anthropic.
\newblock Claude, 2023{\natexlab{a}}.
\newblock URL \url{https://claude.ai/}.

\bibitem[Anthropic(2023{\natexlab{b}})]{claude2}
Anthropic.
\newblock Claude 2, 2023{\natexlab{b}}.
\newblock URL \url{https://www.anthropic.com/news/claude-2}.

\bibitem[Antverg \& Belinkov(2021)Antverg and Belinkov]{antverg2021pitfalls}
Antverg, O. and Belinkov, Y.
\newblock {On the Pitfalls of Analyzing Individual Neurons in Language Models}.
\newblock In \emph{ICLR}, 2021.

\bibitem[Ashkboos et~al.(2024)Ashkboos, Croci, Nascimento, Hoefler, and Hensman]{ashkboos2024slicegpt}
Ashkboos, S., Croci, M.~L., Nascimento, M. G.~d., Hoefler, T., and Hensman, J.
\newblock {SliceGPT: Compress Large Language Models by Deleting Rows and Columns}.
\newblock \emph{arXiv preprint arXiv:2401.15024}, 2024.

\bibitem[Askell et~al.(2021)Askell, Bai, Chen, Drain, Ganguli, Henighan, Jones, Joseph, Mann, DasSarma, et~al.]{askell2021general}
Askell, A., Bai, Y., Chen, A., Drain, D., Ganguli, D., Henighan, T., Jones, A., Joseph, N., Mann, B., DasSarma, N., et~al.
\newblock {A General Language Assistant as a Laboratory for Alignment}.
\newblock \emph{arXiv preprint arXiv:2112.00861}, 2021.

\bibitem[Bach et~al.(2015)Bach, Binder, Montavon, Klauschen, M{\"u}ller, and Samek]{bach2015pixel}
Bach, S., Binder, A., Montavon, G., Klauschen, F., M{\"u}ller, K.-R., and Samek, W.
\newblock {On Pixel-Wise Explanations for Non-linear Classifier Decisions by Layer-Wise Relevance Propagation}.
\newblock \emph{PloS one}, 2015.

\bibitem[Bai et~al.(2022{\natexlab{a}})Bai, Jones, Ndousse, Askell, Chen, DasSarma, Drain, Fort, Ganguli, Henighan, et~al.]{bai2022training}
Bai, Y., Jones, A., Ndousse, K., Askell, A., Chen, A., DasSarma, N., Drain, D., Fort, S., Ganguli, D., Henighan, T., et~al.
\newblock {Training a Helpful and Harmless Assistant With Reinforcement Learning From Human Feedback}.
\newblock \emph{arXiv preprint arXiv:2204.05862}, 2022{\natexlab{a}}.

\bibitem[Bai et~al.(2022{\natexlab{b}})Bai, Kadavath, Kundu, Askell, Kernion, Jones, Chen, Goldie, Mirhoseini, McKinnon, et~al.]{bai2022constitutional}
Bai, Y., Kadavath, S., Kundu, S., Askell, A., Kernion, J., Jones, A., Chen, A., Goldie, A., Mirhoseini, A., McKinnon, C., et~al.
\newblock {Constitutional AI: Harmlessness From AI Feedback}.
\newblock \emph{arXiv preprint arXiv:2212.08073}, 2022{\natexlab{b}}.

\bibitem[Belinkov(2022)]{belinkov2022probing}
Belinkov, Y.
\newblock {Probing Classifiers: Promises, Shortcomings, and Advances}.
\newblock \emph{Computational Linguistics}, 2022.

\bibitem[Brown et~al.(2020)Brown, Mann, Ryder, Subbiah, Kaplan, Dhariwal, Neelakantan, Shyam, Sastry, Askell, et~al.]{brown2020language}
Brown, T., Mann, B., Ryder, N., Subbiah, M., Kaplan, J.~D., Dhariwal, P., Neelakantan, A., Shyam, P., Sastry, G., Askell, A., et~al.
\newblock {Language Models are Few-Shot Learners}.
\newblock In \emph{NeurIPS}, 2020.

\bibitem[Burkart \& Huber(2021)Burkart and Huber]{burkart2021survey}
Burkart, N. and Huber, M.~F.
\newblock {A Survey on the Explainability of Supervised Machine Learning}.
\newblock \emph{Journal of Artificial Intelligence Research}, 2021.

\bibitem[Burns et~al.(2023)Burns, Ye, Klein, and Steinhardt]{burns2022discovering}
Burns, C., Ye, H., Klein, D., and Steinhardt, J.
\newblock {Discovering Latent Knowledge in Language Models Without Supervision}.
\newblock In \emph{ICLR}, 2023.

\bibitem[Campbell et~al.(2023)Campbell, Guo, and Ren]{campbell2023localizing}
Campbell, J., Guo, P., and Ren, R.
\newblock {Localizing Lying in Llama: Understanding Instructed Dishonesty on True-False Questions Through Prompting, Probing, and Patching}.
\newblock In \emph{Socially Responsible Language Modelling Research}, 2023.

\bibitem[Cao et~al.(2021)Cao, Sanh, and Rush]{cao2021low}
Cao, S., Sanh, V., and Rush, A.~M.
\newblock {Low-Complexity Probing via Finding Subnetworks}.
\newblock In \emph{NAACL}, 2021.

\bibitem[Carlini et~al.(2023)Carlini, Nasr, Choquette-Choo, Jagielski, Gao, Awadalla, Koh, Ippolito, Lee, Tramer, et~al.]{carlini2023aligned}
Carlini, N., Nasr, M., Choquette-Choo, C.~A., Jagielski, M., Gao, I., Awadalla, A., Koh, P.~W., Ippolito, D., Lee, K., Tramer, F., et~al.
\newblock {Are Aligned Neural Networks Adversarially Aligned?}
\newblock In \emph{NeurIPS}, 2023.

\bibitem[Chen et~al.(2020)Chen, Frankle, Chang, Liu, Zhang, Wang, and Carbin]{chen2020lottery}
Chen, T., Frankle, J., Chang, S., Liu, S., Zhang, Y., Wang, Z., and Carbin, M.
\newblock {The Lottery Ticket Hypothesis for Pre-trained BERT Networks}.
\newblock \emph{NeurIPS}, 2020.

\bibitem[Chen et~al.(2023)Chen, Ding, Yadav, Zharkov, and Liang]{chen2023lorashear}
Chen, T., Ding, T., Yadav, B., Zharkov, I., and Liang, L.
\newblock {LoRAShear: Efficient Large Language Model Structured Pruning and Knowledge Recovery}.
\newblock \emph{arXiv preprint arXiv:2310.18356}, 2023.

\bibitem[Chiang et~al.(2023)Chiang, Li, Lin, Sheng, Wu, Zhang, Zheng, Zhuang, Zhuang, Gonzalez, Stoica, and Xing]{vicuna2023}
Chiang, W.-L., Li, Z., Lin, Z., Sheng, Y., Wu, Z., Zhang, H., Zheng, L., Zhuang, S., Zhuang, Y., Gonzalez, J.~E., Stoica, I., and Xing, E.~P.
\newblock {Vicuna: An Open-Source Chatbot Impressing GPT-4 with 90\%* ChatGPT Quality}, March 2023.
\newblock URL \url{https://lmsys.org/blog/2023-03-30-vicuna/}.

\bibitem[Clark et~al.(2019{\natexlab{a}})Clark, Lee, Chang, Kwiatkowski, Collins, and Toutanova]{clark2019boolq}
Clark, C., Lee, K., Chang, M.-W., Kwiatkowski, T., Collins, M., and Toutanova, K.
\newblock {BoolQ: Exploring the Surprising Difficulty of Natural Yes/No Questions}.
\newblock In \emph{NAACL}, 2019{\natexlab{a}}.

\bibitem[Clark et~al.(2019{\natexlab{b}})Clark, Khandelwal, Levy, and Manning]{clark2019does}
Clark, K., Khandelwal, U., Levy, O., and Manning, C.~D.
\newblock {What does BERT look at? An Analysis of BERT’s Attention}.
\newblock In \emph{ACL}, 2019{\natexlab{b}}.

\bibitem[Clark et~al.(2018)Clark, Cowhey, Etzioni, Khot, Sabharwal, Schoenick, and Tafjord]{clark2018think}
Clark, P., Cowhey, I., Etzioni, O., Khot, T., Sabharwal, A., Schoenick, C., and Tafjord, O.
\newblock {Think you have Solved Question Answering? Try ARC, the AI2 Reasoning Challenge}.
\newblock \emph{arXiv preprint arXiv:1803.05457}, 2018.

\bibitem[Conneau et~al.(2018)Conneau, Kruszewski, Lample, Barrault, and Baroni]{conneau2018you}
Conneau, A., Kruszewski, G., Lample, G., Barrault, L., and Baroni, M.
\newblock {What You Can Cram Into a Single Vector: Probing Sentence Embeddings for Linguistic Properties}.
\newblock In \emph{ACL}, 2018.

\bibitem[Dai et~al.(2022)Dai, Dong, Hao, Sui, Chang, and Wei]{dai2021knowledge}
Dai, D., Dong, L., Hao, Y., Sui, Z., Chang, B., and Wei, F.
\newblock {Knowledge Neurons in Pretrained Transformers}.
\newblock In \emph{ACL}, 2022.

\bibitem[Dai et~al.(2024)Dai, Pan, Sun, Ji, Xu, Liu, Wang, and Yang]{dai2023safe}
Dai, J., Pan, X., Sun, R., Ji, J., Xu, X., Liu, M., Wang, Y., and Yang, Y.
\newblock {Safe RLHF: Safe Reinforcement Learning from Human Feedback}.
\newblock In \emph{ICLR}, 2024.

\bibitem[Dalvi et~al.(2019)Dalvi, Durrani, Sajjad, Belinkov, Bau, and Glass]{dalvi2019one}
Dalvi, F., Durrani, N., Sajjad, H., Belinkov, Y., Bau, A., and Glass, J.
\newblock {What Is One Grain of Sand in the Desert? Analyzing Individual Neurons in Deep NLP Models}.
\newblock In \emph{AAAI}, 2019.

\bibitem[Eckart \& Young(1936)Eckart and Young]{eckart1936approximation}
Eckart, C. and Young, G.
\newblock {The Approximation of One Matrix by Another of Lower Rank}.
\newblock \emph{Psychometrika}, 1\penalty0 (3):\penalty0 211--218, 1936.

\bibitem[Frankle \& Carbin(2018)Frankle and Carbin]{frankle2018lottery}
Frankle, J. and Carbin, M.
\newblock {The Lottery Ticket Hypothesis: Finding Sparse, Trainable Neural Networks}.
\newblock In \emph{ICLR}, 2018.

\bibitem[Frantar \& Alistarh(2023)Frantar and Alistarh]{frantar2023sparsegpt}
Frantar, E. and Alistarh, D.
\newblock {SparseGPT: Massive Language Models Can Be Accurately Pruned in One-Shot}.
\newblock In \emph{ICML}, 2023.

\bibitem[Gao et~al.(2023)Gao, Tow, Abbasi, Biderman, Black, DiPofi, Foster, Golding, Hsu, Le~Noac'h, Li, McDonell, Muennighoff, Ociepa, Phang, Reynolds, Schoelkopf, Skowron, Sutawika, Tang, Thite, Wang, Wang, and Zou]{eval-harness}
Gao, L., Tow, J., Abbasi, B., Biderman, S., Black, S., DiPofi, A., Foster, C., Golding, L., Hsu, J., Le~Noac'h, A., Li, H., McDonell, K., Muennighoff, N., Ociepa, C., Phang, J., Reynolds, L., Schoelkopf, H., Skowron, A., Sutawika, L., Tang, E., Thite, A., Wang, B., Wang, K., and Zou, A.
\newblock {A Framework for Few-shot Language Model Evaluation}, 2023.
\newblock URL \url{https://zenodo.org/records/10256836}.

\bibitem[Geva et~al.(2023)Geva, Bastings, Filippova, and Globerson]{geva2023dissecting}
Geva, M., Bastings, J., Filippova, K., and Globerson, A.
\newblock {Dissecting Recall of Factual Associations in Auto-Regressive Language Models}.
\newblock In \emph{EMNLP}, 2023.

\bibitem[Guo et~al.(2021)Guo, Rush, and Kim]{guo2020parameter}
Guo, D., Rush, A.~M., and Kim, Y.
\newblock {Parameter-efficient transfer learning with diff pruning}.
\newblock In \emph{ACL}, 2021.

\bibitem[Gurnee et~al.(2024)Gurnee, Horsley, Guo, Kheirkhah, Sun, Hathaway, Nanda, and Bertsimas]{gurnee2024universal}
Gurnee, W., Horsley, T., Guo, Z.~C., Kheirkhah, T.~R., Sun, Q., Hathaway, W., Nanda, N., and Bertsimas, D.
\newblock {Universal Neurons in GPT2 Language Models}.
\newblock \emph{arXiv preprint arXiv:2401.12181}, 2024.

\bibitem[Han et~al.(2016)Han, Mao, and Dally]{han2015deep}
Han, S., Mao, H., and Dally, W.~J.
\newblock {Deep Compression: Compressing Deep Neural Networks with Pruning, Trained Quantization and Huffman Coding}.
\newblock In \emph{ICLR}, 2016.

\bibitem[He et~al.(2017)He, Zhang, and Sun]{he2017channel}
He, Y., Zhang, X., and Sun, J.
\newblock {Channel Pruning for Accelerating Very Deep Neural Networks}.
\newblock In \emph{ICCV}, 2017.

\bibitem[Hewitt \& Liang(2019)Hewitt and Liang]{hewitt2019designing}
Hewitt, J. and Liang, P.
\newblock {Designing and Interpreting Probes With Control Tasks}.
\newblock In \emph{EMNLP}, 2019.

\bibitem[Hsu et~al.(2021)Hsu, Hua, Chang, Lou, Shen, and Jin]{hsu2022language}
Hsu, Y.-C., Hua, T., Chang, S., Lou, Q., Shen, Y., and Jin, H.
\newblock {Language Model Compression With Weighted Low-Rank Factorization}.
\newblock In \emph{ICLR}, 2021.

\bibitem[Hu et~al.(2022)Hu, Shen, Wallis, Allen-Zhu, Li, Wang, Wang, and Chen]{hu2021lora}
Hu, E.~J., Shen, Y., Wallis, P., Allen-Zhu, Z., Li, Y., Wang, S., Wang, L., and Chen, W.
\newblock {LoRA: Low-Rank Adaptation of Large Language Models}.
\newblock In \emph{ICLR}, 2022.

\bibitem[Huang et~al.(2024{\natexlab{a}})Huang, Hu, and Liu]{huang2024vaccine}
Huang, T., Hu, S., and Liu, L.
\newblock Vaccine: Perturbation-aware alignment for large language model.
\newblock \emph{arXiv preprint arXiv:2402.01109}, 2024{\natexlab{a}}.

\bibitem[Huang et~al.(2024{\natexlab{b}})Huang, Gupta, Xia, Li, and Chen]{huang2023catastrophic}
Huang, Y., Gupta, S., Xia, M., Li, K., and Chen, D.
\newblock {Catastrophic Jailbreak of Open-source LLMs via Exploiting Generation}.
\newblock In \emph{ICLR}, 2024{\natexlab{b}}.

\bibitem[Jain et~al.(2023)Jain, Kirk, Lubana, Dick, Tanaka, Grefenstette, Rockt{\"a}schel, and Krueger]{jain2023mechanistically}
Jain, S., Kirk, R., Lubana, E.~S., Dick, R.~P., Tanaka, H., Grefenstette, E., Rockt{\"a}schel, T., and Krueger, D.~S.
\newblock {Mechanistically Analyzing the Effects of Fine-Tuning on Procedurally Defined Tasks}.
\newblock \emph{arXiv preprint arXiv:2311.12786}, 2023.

\bibitem[Jones et~al.(2023)Jones, Dragan, Raghunathan, and Steinhardt]{jones2023automatically}
Jones, E., Dragan, A., Raghunathan, A., and Steinhardt, J.
\newblock {Automatically Auditing Large Language Models via Discrete Optimization}.
\newblock In \emph{ICML}, 2023.

\bibitem[Kwon et~al.(2023)Kwon, Li, Zhuang, Sheng, Zheng, Yu, Gonzalez, Zhang, and Stoica]{kwon2023efficient}
Kwon, W., Li, Z., Zhuang, S., Sheng, Y., Zheng, L., Yu, C.~H., Gonzalez, J., Zhang, H., and Stoica, I.
\newblock {Efficient Memory Management for Large Language Model Serving with PagedAttention}.
\newblock In \emph{Proceedings of the 29th Symposium on Operating Systems Principles}, pp.\  611--626, 2023.

\bibitem[Lagunas et~al.(2021)Lagunas, Charlaix, Sanh, and Rush]{lagunas2021block}
Lagunas, F., Charlaix, E., Sanh, V., and Rush, A.~M.
\newblock {Block Pruning For Faster Transformers}.
\newblock In \emph{EMNLP}, 2021.

\bibitem[Lan \& Barez(2023)Lan and Barez]{lan2023locating}
Lan, M. and Barez, F.
\newblock {Locating Cross-Task Sequence Continuation Circuits in Transformers}.
\newblock \emph{arXiv preprint arXiv:2311.04131}, 2023.

\bibitem[Lee et~al.(2024)Lee, Bai, Pres, Wattenberg, Kummerfeld, and Mihalcea]{lee2024mechanistic}
Lee, A., Bai, X., Pres, I., Wattenberg, M., Kummerfeld, J.~K., and Mihalcea, R.
\newblock {A Mechanistic Understanding of Alignment Algorithms: A Case Study on DPO and Toxicity}.
\newblock \emph{arXiv preprint arXiv:2401.01967}, 2024.

\bibitem[Lee et~al.(2023)Lee, Phatale, Mansoor, Lu, Mesnard, Bishop, Carbune, and Rastogi]{lee2023rlaif}
Lee, H., Phatale, S., Mansoor, H., Lu, K., Mesnard, T., Bishop, C., Carbune, V., and Rastogi, A.
\newblock {RLAIF: Scaling Reinforcement Learning from Human Feedback with AI Feedback}.
\newblock \emph{arXiv preprint arXiv:2309.00267}, 2023.

\bibitem[Lee et~al.(2019)Lee, Ajanthan, and Torr]{lee2019snip}
Lee, N., Ajanthan, T., and Torr, P.
\newblock {SNIP: Single-shot Network Pruning based on Connection Sensitivity}.
\newblock In \emph{ICLR}, 2019.

\bibitem[Li et~al.(2017)Li, Kadav, Durdanovic, Samet, and Graf]{li2017pruning}
Li, H., Kadav, A., Durdanovic, I., Samet, H., and Graf, H.~P.
\newblock {Pruning Filters for Efficient ConvNets}.
\newblock In \emph{ICLR}, 2017.

\bibitem[Li et~al.(2023{\natexlab{a}})Li, Patel, Vi{\'e}gas, Pfister, and Wattenberg]{li2023inference}
Li, K., Patel, O., Vi{\'e}gas, F., Pfister, H., and Wattenberg, M.
\newblock {Inference-Time Intervention: Eliciting Truthful Answers from a Language Model}.
\newblock In \emph{NeurIPS}, 2023{\natexlab{a}}.

\bibitem[Li et~al.(2023{\natexlab{b}})Li, Yu, Zhang, Liang, He, Chen, and Zhao]{li2023losparse}
Li, Y., Yu, Y., Zhang, Q., Liang, C., He, P., Chen, W., and Zhao, T.
\newblock {LoSparse: Structured Compression of Large Language Models based on Low-Rank and Sparse Approximation}.
\newblock In \emph{ICML}, 2023{\natexlab{b}}.

\bibitem[Li et~al.(2024)Li, Wei, Zhao, Zhang, and Zhang]{li2024rain}
Li, Y., Wei, F., Zhao, J., Zhang, C., and Zhang, H.
\newblock {RAIN: Your Language Models Can Align Themselves without Finetuning}.
\newblock In \emph{ICLR}, 2024.

\bibitem[Liu et~al.(2024)Liu, Xu, Chen, and Xiao]{liu2023autodan}
Liu, X., Xu, N., Chen, M., and Xiao, C.
\newblock {AutoDAN: Generating Stealthy Jailbreak Prompts on Aligned Large Language Models}.
\newblock In \emph{ICLR}, 2024.

\bibitem[Liu et~al.(2023)Liu, Deng, Xu, Li, Zheng, Zhang, Zhao, Zhang, and Liu]{liu2023jailbreaking}
Liu, Y., Deng, G., Xu, Z., Li, Y., Zheng, Y., Zhang, Y., Zhao, L., Zhang, T., and Liu, Y.
\newblock {Jailbreaking ChatGPT via Prompt Engineering: An Empirical Study}.
\newblock \emph{arXiv preprint arXiv:2305.13860}, 2023.

\bibitem[Liu et~al.(2017)Liu, Li, Shen, Huang, Yan, and Zhang]{liu2017learning}
Liu, Z., Li, J., Shen, Z., Huang, G., Yan, S., and Zhang, C.
\newblock {Learning Efficient Convolutional Networks through Network Slimming}.
\newblock In \emph{ICCV}, 2017.

\bibitem[Lubana et~al.(2023)Lubana, Bigelow, Dick, Krueger, and Tanaka]{lubana2023mechanistic}
Lubana, E.~S., Bigelow, E.~J., Dick, R.~P., Krueger, D., and Tanaka, H.
\newblock {Mechanistic Mode Connectivity}.
\newblock In \emph{ICML}, 2023.

\bibitem[Lundberg \& Lee(2017)Lundberg and Lee]{lundberg2017unified}
Lundberg, S.~M. and Lee, S.-I.
\newblock {A Unified Approach to Interpreting Model Predictions}.
\newblock In \emph{NeurIPS}, 2017.

\bibitem[Luo et~al.(2017)Luo, Wu, and Lin]{luo2017thinet}
Luo, J.-H., Wu, J., and Lin, W.
\newblock {ThiNet: A Filter Level Pruning Method for Deep Neural Network Compression}.
\newblock In \emph{ICCV}, 2017.

\bibitem[Ma et~al.(2023)Ma, Fang, and Wang]{ma2023llmpruner}
Ma, X., Fang, G., and Wang, X.
\newblock {LLM-Pruner: On the Structural Pruning of Large Language Models}.
\newblock In \emph{NeurIPS}, 2023.

\bibitem[Madsen et~al.(2022)Madsen, Reddy, and Chandar]{madsen2022post}
Madsen, A., Reddy, S., and Chandar, S.
\newblock {Post-hoc Interpretability for Neural NLP: A Survey}.
\newblock \emph{ACM Computing Surveys}, 2022.

\bibitem[Maini et~al.(2023)Maini, Mozer, Sedghi, Lipton, Kolter, and Zhang]{maini2023can}
Maini, P., Mozer, M.~C., Sedghi, H., Lipton, Z.~C., Kolter, J.~Z., and Zhang, C.
\newblock {Can Neural Network Memorization Be Localized?}
\newblock In \emph{ICML}, 2023.

\bibitem[Mehrotra et~al.(2023)Mehrotra, Zampetakis, Kassianik, Nelson, Anderson, Singer, and Karbasi]{mehrotra2023tree}
Mehrotra, A., Zampetakis, M., Kassianik, P., Nelson, B., Anderson, H., Singer, Y., and Karbasi, A.
\newblock {Tree of Attacks: Jailbreaking Black-Box LLMs Automatically}.
\newblock \emph{arXiv preprint arXiv:2312.02119}, 2023.

\bibitem[Meng et~al.(2022)Meng, Bau, Andonian, and Belinkov]{meng2022locating}
Meng, K., Bau, D., Andonian, A., and Belinkov, Y.
\newblock {Locating and Editing Factual Associations in GPT}.
\newblock \emph{NeurIPS}, 2022.

\bibitem[Merullo et~al.(2021)Merullo, Eickhoff, and Pavlick]{merullo2023circuit}
Merullo, J., Eickhoff, C., and Pavlick, E.
\newblock {Circuit Component Reuse Across Tasks in Transformer Language Models}.
\newblock In \emph{ICLR}, 2021.

\bibitem[Michel et~al.(2019)Michel, Levy, and Neubig]{michel2019sixteen}
Michel, P., Levy, O., and Neubig, G.
\newblock {Are Sixteen Heads Really Better Than One?}
\newblock In \emph{NeurIPS}, 2019.

\bibitem[Mihaylov et~al.(2018)Mihaylov, Clark, Khot, and Sabharwal]{mihaylov2018can}
Mihaylov, T., Clark, P., Khot, T., and Sabharwal, A.
\newblock {Can a Suit of Armor Conduct Electricity? A New Dataset for Open Book Question Answering}.
\newblock In \emph{EMNLP}, 2018.

\bibitem[Molchanov et~al.(2017)Molchanov, Tyree, Karras, Aila, and Kautz]{molchanov2016pruning}
Molchanov, P., Tyree, S., Karras, T., Aila, T., and Kautz, J.
\newblock {Pruning Convolutional Neural Networks for Resource Efficient Inference}.
\newblock In \emph{ICLR}, 2017.

\bibitem[OpenAI(2022)]{chatgpt}
OpenAI.
\newblock {Introducing ChatGPT}.
\newblock \url{https://openai.com/blog/chatgpt}, 2022.

\bibitem[OpenAI(2023)]{openai2023gpt4}
OpenAI.
\newblock {GPT-4 Technical Report}, 2023.

\bibitem[Ouyang et~al.(2022)Ouyang, Wu, Jiang, Almeida, Wainwright, Mishkin, Zhang, Agarwal, Slama, Ray, et~al.]{ouyang2022training}
Ouyang, L., Wu, J., Jiang, X., Almeida, D., Wainwright, C., Mishkin, P., Zhang, C., Agarwal, S., Slama, K., Ray, A., et~al.
\newblock {Training Language Models to Follow Instructions With Human Feedback}.
\newblock In \emph{NeurIPS}, 2022.

\bibitem[Panigrahi et~al.(2023)Panigrahi, Saunshi, Zhao, and Arora]{panigrahi2023task}
Panigrahi, A., Saunshi, N., Zhao, H., and Arora, S.
\newblock {Task-Specific Skill Localization in Fine-tuned Language Models}.
\newblock In \emph{ICML}, 2023.

\bibitem[Qi et~al.(2024{\natexlab{a}})Qi, Huang, Panda, Wang, and Mittal]{qi2023visual}
Qi, X., Huang, K., Panda, A., Wang, M., and Mittal, P.
\newblock {Visual Adversarial Examples Jailbreak Aligned Large Language Models}.
\newblock In \emph{AAAI}, 2024{\natexlab{a}}.

\bibitem[Qi et~al.(2024{\natexlab{b}})Qi, Zeng, Xie, Chen, Jia, Mittal, and Henderson]{qi2023fine}
Qi, X., Zeng, Y., Xie, T., Chen, P.-Y., Jia, R., Mittal, P., and Henderson, P.
\newblock {Fine-tuning Aligned Language Models Compromises Safety, Even When Users Do Not Intend To!}
\newblock In \emph{ICLR}, 2024{\natexlab{b}}.

\bibitem[Radford et~al.(2019)Radford, Wu, Child, Luan, Amodei, Sutskever, et~al.]{radford2019language}
Radford, A., Wu, J., Child, R., Luan, D., Amodei, D., Sutskever, I., et~al.
\newblock {Language Models are Unsupervised Multitask Learners}.
\newblock \emph{OpenAI blog}, 2019.

\bibitem[Rafailov et~al.(2023)Rafailov, Sharma, Mitchell, Manning, Ermon, and Finn]{rafailov2023direct}
Rafailov, R., Sharma, A., Mitchell, E., Manning, C.~D., Ermon, S., and Finn, C.
\newblock Direct preference optimization: Your language model is secretly a reward model.
\newblock In \emph{NeurIPS}, 2023.

\bibitem[Ribeiro et~al.(2016)Ribeiro, Singh, and Guestrin]{ribeiro2016should}
Ribeiro, M.~T., Singh, S., and Guestrin, C.
\newblock {``Why Should I Trust You?” Explaining the Predictions of Any Classifier}.
\newblock In \emph{KDD}, 2016.

\bibitem[Sakaguchi et~al.(2021)Sakaguchi, Bras, Bhagavatula, and Choi]{sakaguchi2021winogrande}
Sakaguchi, K., Bras, R.~L., Bhagavatula, C., and Choi, Y.
\newblock {WinoGrande: An Adversarial Winograd Schema Challenge at Scale }.
\newblock \emph{Communications of the ACM}, 2021.

\bibitem[Sanh et~al.(2020)Sanh, Wolf, and Rush]{sanh2020movement}
Sanh, V., Wolf, T., and Rush, A.
\newblock {Movement Pruning: Adaptive Sparsity by Fine-Tuning}.
\newblock \emph{NeurIPS}, 2020.

\bibitem[Schotth{\"o}fer et~al.(2022)Schotth{\"o}fer, Zangrando, Kusch, Ceruti, and Tudisco]{schotthofer2022low}
Schotth{\"o}fer, S., Zangrando, E., Kusch, J., Ceruti, G., and Tudisco, F.
\newblock {Low-Rank Lottery Tickets: Finding Efficient Low-Rank Neural Networks via Matrix Differential Equations}.
\newblock \emph{NeurIPS}, 2022.

\bibitem[Sharma et~al.(2023)Sharma, Ash, and Misra]{sharma2023truth}
Sharma, P., Ash, J.~T., and Misra, D.
\newblock {The Truth is in There: Improving Reasoning in Language Models with Layer-Selective Rank Reduction}.
\newblock \emph{arXiv preprint arXiv:2312.13558}, 2023.

\bibitem[Shen et~al.(2023)Shen, Chen, Backes, Shen, and Zhang]{shen2023anything}
Shen, X., Chen, Z., Backes, M., Shen, Y., and Zhang, Y.
\newblock {``Do Anything Now": Characterizing and Evaluating In-The-Wild Jailbreak Prompts on Large Language Models}.
\newblock \emph{arXiv preprint arXiv:2308.03825}, 2023.

\bibitem[Shrikumar et~al.(2017)Shrikumar, Greenside, and Kundaje]{shrikumar2017learning}
Shrikumar, A., Greenside, P., and Kundaje, A.
\newblock {Learning Important Features Through Propagating Activation Differences}.
\newblock In \emph{ICML}, 2017.

\bibitem[Springenberg et~al.(2015)Springenberg, Dosovitskiy, Brox, and Riedmiller]{springenberg2014striving}
Springenberg, J., Dosovitskiy, A., Brox, T., and Riedmiller, M.
\newblock {Striving for Simplicity: The All Convolutional Net}.
\newblock In \emph{ICLR (workshop track)}, 2015.

\bibitem[Sun et~al.(2024)Sun, Liu, Bair, and Kolter]{sun2023simple}
Sun, M., Liu, Z., Bair, A., and Kolter, J.~Z.
\newblock {A Simple and Effective Pruning Approach for Large Language Models}.
\newblock In \emph{ICLR}, 2024.

\bibitem[Sun et~al.(2023)Sun, Shen, Zhou, Zhang, Chen, Cox, Yang, and Gan]{sun2023principle}
Sun, Z., Shen, Y., Zhou, Q., Zhang, H., Chen, Z., Cox, D., Yang, Y., and Gan, C.
\newblock {Principle-Driven Self-Alignment of Language Models from Scratch with Minimal Human Supervision}.
\newblock In \emph{NeurIPS}, 2023.

\bibitem[Sundararajan et~al.(2017)Sundararajan, Taly, and Yan]{sundararajan2017axiomatic}
Sundararajan, M., Taly, A., and Yan, Q.
\newblock {Axiomatic Attribution for Deep Networks}.
\newblock In \emph{ICML}, 2017.

\bibitem[Taori et~al.(2023)Taori, Gulrajani, Zhang, Dubois, Li, Guestrin, Liang, and Hashimoto]{alpaca}
Taori, R., Gulrajani, I., Zhang, T., Dubois, Y., Li, X., Guestrin, C., Liang, P., and Hashimoto, T.~B.
\newblock {Stanford Alpaca: An Instruction-following LLaMA model}.
\newblock \url{https://github.com/tatsu-lab/stanford_alpaca}, 2023.

\bibitem[Team et~al.(2023)Team, Anil, Borgeaud, Wu, Alayrac, Yu, Soricut, Schalkwyk, Dai, Hauth, et~al.]{team2023gemini}
Team, G., Anil, R., Borgeaud, S., Wu, Y., Alayrac, J.-B., Yu, J., Soricut, R., Schalkwyk, J., Dai, A.~M., Hauth, A., et~al.
\newblock {Gemini: A Family of Highly Capable Multimodal Models}.
\newblock \emph{arXiv preprint arXiv:2312.11805}, 2023.

\bibitem[Tjoa \& Guan(2020)Tjoa and Guan]{tjoa2020survey}
Tjoa, E. and Guan, C.
\newblock {A Survey on Explainable Artificial Intelligence (XAI): Towards Medical XAI}.
\newblock \emph{IEEE transactions on neural networks and learning systems}, 2020.

\bibitem[Todd et~al.(2023)Todd, Li, Sharma, Mueller, Wallace, and Bau]{todd2023function}
Todd, E., Li, M.~L., Sharma, A.~S., Mueller, A., Wallace, B.~C., and Bau, D.
\newblock {Function Vectors in Large Language Models}.
\newblock \emph{arXiv preprint arXiv:2310.15213}, 2023.

\bibitem[Touvron et~al.(2023{\natexlab{a}})Touvron, Lavril, Izacard, Martinet, Lachaux, Lacroix, Rozi{\`e}re, Goyal, Hambro, Azhar, et~al.]{touvron2023llama}
Touvron, H., Lavril, T., Izacard, G., Martinet, X., Lachaux, M.-A., Lacroix, T., Rozi{\`e}re, B., Goyal, N., Hambro, E., Azhar, F., et~al.
\newblock {LLaMA: Open and Efficient Foundation Language Models}.
\newblock \emph{arXiv preprint arXiv:2302.13971}, 2023{\natexlab{a}}.

\bibitem[Touvron et~al.(2023{\natexlab{b}})Touvron, Martin, Stone, Albert, Almahairi, Babaei, Bashlykov, Batra, Bhargava, Bhosale, et~al.]{touvron2023llama-2}
Touvron, H., Martin, L., Stone, K., Albert, P., Almahairi, A., Babaei, Y., Bashlykov, N., Batra, S., Bhargava, P., Bhosale, S., et~al.
\newblock {Llama 2: Open Foundation and Fine-Tuned Chat Models}.
\newblock \emph{arXiv preprint arXiv:2307.09288}, 2023{\natexlab{b}}.

\bibitem[Wang et~al.(2019)Wang, Singh, Michael, Hill, Levy, and Bowman]{wang2018glue}
Wang, A., Singh, A., Michael, J., Hill, F., Levy, O., and Bowman, S.~R.
\newblock {GLUE: A Multi-Task Benchmark and Analysis Platform for Natural Language Understanding}.
\newblock In \emph{ICLR}, 2019.

\bibitem[Wang et~al.(2022)Wang, Wen, Zhang, Hou, Liu, and Li]{wang2022finding}
Wang, X., Wen, K., Zhang, Z., Hou, L., Liu, Z., and Li, J.
\newblock {Finding Skill Neurons in Pre-trained Transformer-based Language Models}.
\newblock In \emph{EMNLP}, 2022.

\bibitem[Wang et~al.(2020)Wang, Wohlwend, and Lei]{wang2020structured}
Wang, Z., Wohlwend, J., and Lei, T.
\newblock {Structured Pruning of Large Language Models}.
\newblock In \emph{EMNLP}, 2020.

\bibitem[Wei et~al.(2023)Wei, Haghtalab, and Steinhardt]{wei2023jailbroken}
Wei, A., Haghtalab, N., and Steinhardt, J.
\newblock {Jailbroken: How Does LLM Safety Training Fail?}
\newblock In \emph{NeurIPS}, 2023.

\bibitem[Wei et~al.(2022)Wei, Bosma, Zhao, Guu, Yu, Lester, Du, Dai, and Le]{wei2021finetuned}
Wei, J., Bosma, M., Zhao, V.~Y., Guu, K., Yu, A.~W., Lester, B., Du, N., Dai, A.~M., and Le, Q.~V.
\newblock {Finetuned Language Models Are Zero-Shot Learners}.
\newblock In \emph{ICLR}, 2022.

\bibitem[Wen et~al.(2016)Wen, Wu, Wang, Chen, and Li]{wen2016learning}
Wen, W., Wu, C., Wang, Y., Chen, Y., and Li, H.
\newblock {Learning Structured Sparsity in Deep Neural Networks}.
\newblock \emph{NeurIPS}, 29, 2016.

\bibitem[Xia et~al.(2022)Xia, Zhong, and Chen]{xia2022structured}
Xia, M., Zhong, Z., and Chen, D.
\newblock {Structured Pruning Learns Compact and Accurate Models}.
\newblock In \emph{ACL}, 2022.

\bibitem[Xia et~al.(2024)Xia, Gao, Zeng, and Chen]{xia2023sheared}
Xia, M., Gao, T., Zeng, Z., and Chen, D.
\newblock {Sheared LLaMA: Accelerating Language Model Pre-training via Structured Pruning}.
\newblock In \emph{ICLR}, 2024.

\bibitem[Xu et~al.(2023)Xu, Lee, Sukhbaatar, and Weston]{xu2023some}
Xu, J., Lee, A., Sukhbaatar, S., and Weston, J.
\newblock {Some things are more CRINGE than others: Preference Optimization with the Pairwise Cringe Loss}.
\newblock \emph{arXiv preprint arXiv:2312.16682}, 2023.

\bibitem[Yang et~al.(2024)Yang, Klein, Celikyilmaz, Peng, and Tian]{yang2024rlcd}
Yang, K., Klein, D., Celikyilmaz, A., Peng, N., and Tian, Y.
\newblock {RLCD: Reinforcement Learning from Contrastive Distillation for LM Alignment}.
\newblock In \emph{ICLR}, 2024.

\bibitem[Yang et~al.(2023)Yang, Wang, Zhang, Petzold, Wang, Zhao, and Lin]{yang2023shadow}
Yang, X., Wang, X., Zhang, Q., Petzold, L., Wang, W.~Y., Zhao, X., and Lin, D.
\newblock {Shadow Alignment: The Ease of Subverting Safely-Aligned Language Models}.
\newblock \emph{arXiv preprint arXiv:2310.02949}, 2023.

\bibitem[Yong et~al.(2023)Yong, Menghini, and Bach]{yong2023low}
Yong, Z.~X., Menghini, C., and Bach, S.
\newblock {Low-Resource Languages Jailbreak GPT-4}.
\newblock In \emph{Socially Responsible Language Modelling Research}, 2023.

\bibitem[Yuan et~al.(2024{\natexlab{a}})Yuan, Pang, Cho, Sukhbaatar, Xu, and Weston]{yuan2024self}
Yuan, W., Pang, R.~Y., Cho, K., Sukhbaatar, S., Xu, J., and Weston, J.
\newblock {Self-Rewarding Language Models}.
\newblock \emph{arXiv preprint arXiv:2401.10020}, 2024{\natexlab{a}}.

\bibitem[Yuan et~al.(2024{\natexlab{b}})Yuan, Jiao, Wang, Huang, He, Shi, and Tu]{yuan2023gpt}
Yuan, Y., Jiao, W., Wang, W., Huang, J.-t., He, P., Shi, S., and Tu, Z.
\newblock {GPT-4 Is Too Smart To Be Safe: Stealthy Chat with LLMs via Cipher}.
\newblock In \emph{ICLR}, 2024{\natexlab{b}}.

\bibitem[Yuan et~al.(2023)Yuan, Shang, Song, Wu, Yan, and Sun]{yuan2023asvd}
Yuan, Z., Shang, Y., Song, Y., Wu, Q., Yan, Y., and Sun, G.
\newblock {ASVD: Activation-aware Singular Value Decomposition for Compressing Large Language Models}.
\newblock \emph{arXiv preprint arXiv:2312.05821}, 2023.

\bibitem[Zeiler \& Fergus(2014)Zeiler and Fergus]{zeiler2014visualizing}
Zeiler, M.~D. and Fergus, R.
\newblock {Visualizing and Understanding Convolutional Networks}.
\newblock In \emph{ECCV}, 2014.

\bibitem[Zellers et~al.(2019)Zellers, Holtzman, Bisk, Farhadi, and Choi]{zellers2019hellaswag}
Zellers, R., Holtzman, A., Bisk, Y., Farhadi, A., and Choi, Y.
\newblock {HellaSwag: Can a Machine Really Finish Your Sentence?}
\newblock In \emph{ACL}, 2019.

\bibitem[Zeng et~al.(2024)Zeng, Lin, Zhang, Yang, Jia, and Shi]{zeng2024johnny}
Zeng, Y., Lin, H., Zhang, J., Yang, D., Jia, R., and Shi, W.
\newblock {How Johnny Can Persuade LLMs to Jailbreak Them: Rethinking Persuasion to Challenge AI Safety by Humanizing LLMs}.
\newblock \emph{arXiv preprint arXiv:2401.06373}, 2024.

\bibitem[Zhan et~al.(2023)Zhan, Fang, Bindu, Gupta, Hashimoto, and Kang]{zhan2023removing}
Zhan, Q., Fang, R., Bindu, R., Gupta, A., Hashimoto, T., and Kang, D.
\newblock {Removing RLHF Protections in GPT-4 via Fine-Tuning}.
\newblock \emph{arXiv preprint arXiv:2311.05553}, 2023.

\bibitem[Zhang et~al.(2023{\natexlab{a}})Zhang, Shen, Yang, Ou, Yu, Zhuang, et~al.]{zhang2023pruning}
Zhang, M., Shen, C., Yang, Z., Ou, L., Yu, X., Zhuang, B., et~al.
\newblock {Pruning Meets Low-Rank Parameter-Efficient Fine-Tuning}.
\newblock \emph{arXiv preprint arXiv:2305.18403}, 2023{\natexlab{a}}.

\bibitem[Zhang et~al.(2023{\natexlab{b}})Zhang, Chen, Bukharin, He, Cheng, Chen, and Zhao]{zhang2023adaptive}
Zhang, Q., Chen, M., Bukharin, A., He, P., Cheng, Y., Chen, W., and Zhao, T.
\newblock {AdaLoRA: Adaptive Budget Allocation for Parameter-Efficient Fine-Tuning}.
\newblock In \emph{ICLR}, 2023{\natexlab{b}}.

\bibitem[Zhao et~al.(2023)Zhao, Chen, Yang, Liu, Deng, Cai, Wang, Yin, and Du]{zhao2023explainability}
Zhao, H., Chen, H., Yang, F., Liu, N., Deng, H., Cai, H., Wang, S., Yin, D., and Du, M.
\newblock {Explainability for Large Language Models: A Survey}.
\newblock \emph{ACM Transactions on Intelligent Systems and Technology}, 2023.

\bibitem[Zhao et~al.(2020)Zhao, Lin, Mi, Jaggi, and Sch{\"u}tze]{zhao2020masking}
Zhao, M., Lin, T., Mi, F., Jaggi, M., and Sch{\"u}tze, H.
\newblock {Masking as an Efficient Alternative to Finetuning for Pretrained Language Models}.
\newblock In \emph{EMNLP}, 2020.

\bibitem[Zhu et~al.(2023)Zhu, Zhang, An, Wu, Barrow, Wang, Huang, Nenkova, and Sun]{zhu2023autodan}
Zhu, S., Zhang, R., An, B., Wu, G., Barrow, J., Wang, Z., Huang, F., Nenkova, A., and Sun, T.
\newblock {AutoDAN: Automatic and Interpretable Adversarial Attacks on Large Language Models}.
\newblock \emph{arXiv preprint arXiv:2310.15140}, 2023.

\bibitem[Ziegler et~al.(2019)Ziegler, Stiennon, Wu, Brown, Radford, Amodei, Christiano, and Irving]{ziegler2019fine}
Ziegler, D.~M., Stiennon, N., Wu, J., Brown, T.~B., Radford, A., Amodei, D., Christiano, P., and Irving, G.
\newblock {Fine-Tuning Language Models From Human Preferences}.
\newblock \emph{arXiv preprint arXiv:1909.08593}, 2019.

\bibitem[Zou et~al.(2023{\natexlab{a}})Zou, Phan, Chen, Campbell, Guo, Ren, Pan, Yin, Mazeika, Dombrowski, et~al.]{zou2023representation}
Zou, A., Phan, L., Chen, S., Campbell, J., Guo, P., Ren, R., Pan, A., Yin, X., Mazeika, M., Dombrowski, A.-K., et~al.
\newblock {Representation Engineering: A Top-Down Approach to AI Transparency}.
\newblock \emph{arXiv preprint arXiv:2310.01405}, 2023{\natexlab{a}}.

\bibitem[Zou et~al.(2023{\natexlab{b}})Zou, Wang, Kolter, and Fredrikson]{zou2023universal}
Zou, A., Wang, Z., Kolter, J.~Z., and Fredrikson, M.
\newblock {Universal and Transferable Adversarial Attacks on Aligned Language Models}.
\newblock \emph{arXiv preprint arXiv:2307.15043}, 2023{\natexlab{b}}.

\end{thebibliography}
\bibliographystyle{icml2024}

\newpage
\appendix
\onecolumn

\section{Related Work}

\subsection{Alignment and Jailbreak}
\label{sec:related_alignment}

Alignment refers to the process of ensuring a machine learning (ML) model's behavior conforms to human values. For example, pretrained language models are usually not aligned with human objectives -- they cannot follow users' instructions, and could potentially generate harmful and incorrect content. During the alignment stage, practitioners would employ Instruction Tuning~\cite{wei2021finetuned, ouyang2022training, touvron2023llama-2}, and Reinforcement Learning from Human Feedback (RLHF)~\cite{ouyang2022training, touvron2023llama-2, bai2022training} to enforce the language models to be \textit{helpful, harmless, and honest} (the HHH principle) \cite{askell2021general}.
Aligned LLMs (e.g., OpenAI ChatGPT~\cite{openai2023gpt4} and Anthropic Claude~\cite{claude, claude2}), as a result, will follow human values and refuse to respond to harmful requests.
Recent work~\cite{rafailov2023direct,xu2023some,dai2023safe,yang2024rlcd,li2024rain,yuan2024self, huang2024vaccine} propose more effective and efficient alignment alternatives to RLHF.
As some examples,
Direct Preference Optimization (DPO)~\cite{rafailov2023direct} directly fine-tunes language models on human preference data, eliminating the need to train a reward model and conduct reinforcement learning;
Self-Rewarding~\cite{yuan2024self} uses the language model itself as a reward model to curate labeled preference data, and then align the language model with DPO in an iterative way;
\cite{dai2023safe} proposes to decouple the goal of safety and helpfulness during alignment, similar to the decoupling goal of our work.

While harmful instructions that are plain and direct would be rejected by aligned LLMs, researchers and communities have identified ways to \textit{bypass or remove} the safety guardrails enforced by LLM alignment efforts --- namely ``jailbreaking'' LLMs. More specifically, \textit{jailbreaking} is a series of attacks where an adversary would coax or enforce the model to deviate from its ethical guidelines. In practical, jailbreak attackers would either employ \textit{adversarial prompts}~\cite{liu2023jailbreaking, zou2023universal, yuan2023gpt, liu2023autodan, shen2023anything, yong2023low, mehrotra2023tree} or manipulate model \textit{decoding} process~\cite{huang2023catastrophic} to bypass LLM safety alignment. Moreover, when having \textit{fine-tuning} access of LLMs, adversaries~\cite{qi2023fine, yang2023shadow, zhan2023removing} could directly remove the guardrails. A jailbroken LLM would provide harmful responses to comply with users' harmful requests, which they otherwise would simply reject (due to the ethical guidelines injected by alignment) --- this could subsequently pose serious safety risks in the real world, since LLMs can directly deliver various harmfulness to individuals and the society.

\subsection{Identifying Task-Specific Regions in Large Language Models}
\label{sec:related_attr}

Attributing the model's behavior to the model's weights is a classic research question in explainable machine learning~\cite{tjoa2020survey, burkart2021survey, ali2023explainable}. Previous studies in pre-transformer eras have explored various approaches to identify task-specific neurons within models, to better interpret and control the model's behavior. Popular techniques mainly include perturbation-based methods that perturb the input of a model and observe the changes in the output of the model~\cite{zeiler2014visualizing, ribeiro2016should}, and gradient-based methods that compute an importance score for model weights based on the results from task-specific back propagation~\cite{springenberg2014striving, bach2015pixel, sundararajan2017axiomatic, shrikumar2017learning, lundberg2017unified}. However, it was only recently that these methods have been rigorously applied to modern transformer models ~\cite{madsen2022post, zhao2023explainability, maini2023can}. %

Probing has emerged as a method for understanding the knowledge encoded in transformers, particularly large language models~\cite{adi2016fine, conneau2018you, hewitt2019designing}. To perform
probing, the model representations and model
parameters are fed into a probe classifier~\cite{belinkov2022probing}, whose task is to identify certain linguistic properties or reasoning
abilities acquired by the model. For instance, previous work has adopted probing-based method to localize truthfulness~\cite{li2023inference, campbell2023localizing}, factuality~\cite{meng2022locating, geva2023dissecting}, toxicity~\cite{lee2024mechanistic}, and knowledge~\cite{burns2022discovering, todd2023function} in LLMs. More recently, \citet{zou2023representation} propose a similar approach to probing: instead of training a classifier, they employ an unsupervised approach, specifically singular value decomposition, to identify significant directions in the representation space. They then demonstrate that these directions can predict and influence the behavior of LLMs.

In addition to the importance-score-based and probing-based methods discussed above, recent studies have also investigated a range of techniques to pinpoint task-specific neurons in transformers. These techniques address various aspects of the model, including linguistic properties~\cite{dalvi2019one, antverg2021pitfalls}, general capabilities~\cite{lan2023locating, gurnee2024universal, merullo2023circuit}, fine-tuning~\cite{panigrahi2023task, lubana2023mechanistic}, and prompt tuning~\cite{wang2022finding}.

The closest concurrent work to ours are~\citet{lee2024mechanistic} and ~\citet{jain2023mechanistically}.  
\citet{lee2024mechanistic} investigate the representation and elicitation of toxicity in a GPT-2~\cite{radford2019language} model, and explores via probing how aligning the model using Direct Preference Optimization (DPO)~\cite{rafailov2023direct} mitigates toxicity. Their findings suggest that DPO does not eliminate the model's ability to generate toxic outputs, but rather learns to bypass
the regions that elicit toxicity. While~\citet{lee2024mechanistic} reveal the fragility of model alignment by probing the GPT-2 model at the granularity of per attention head level, our study examines the more advanced Llama family models~\cite{touvron2023llama-2} using per-neuron or per-rank attribution, which is more relevant to real-world applications and allows for a more fine-grained analysis. \citet{jain2023mechanistically} investigate the impact of fine-tuning on LLMs, using methods such as probing and data-agnostic structured pruning. They suggest that fine-tuning model weights might create a `safety wrapper' around core models, rendering the effects of safety alignment easily reversible. In contrast to their approach which operates on the transformer block level, our study examines the models at a more fine-grained neuron level and rank level.

\subsection{Low Rank Compression}
\label{sec:lowrank}
Our work borrows insight from \citet{hsu2022language, schotthofer2022low, zhang2023adaptive, yuan2023asvd, li2023losparse}. We address two similar methods~\citep{yuan2023asvd, hsu2022language} and point out their differences from \ours. \citet{yuan2023asvd} propose ASVD (Activation-aware Singular Value Decomposition) as a low-rank compression technique for language models. Their method performs SVD on $WS$, where $S$ is a diagonal matrix with $S_{ii}$ given by the norm of the activations 
\begin{equation*}
    S_{ii} = \left(\frac{1}{n} \sum_{j}|X_{ij}|\right)^\alpha ~~~\textrm{or} ~~~\left(\max_{j}|X_{ij}|\right)^\alpha.
\end{equation*}
\citet{hsu2022language} propose FWSVD (Fisher-Weighted SVD) as a low-rank compression technique, where the SVD is applied to $\hat{I}W$. Here 
\begin{equation*}
    \hat{I} = \mathrm{diag}(\sqrt{\hat{I}_{W_1}}, \dots, \sqrt{\hat{I}_{W_d}}), ~~~\hat{I}_{W_i} = \sum_{j} \hat{I}_{W_{ij}}.
\end{equation*}
The $\hat{I}_{W_{ij}}$ is defined as the Fisher information of the loss function with respect to the weight entry $W_{ij}$
\begin{equation*}
    \hat{I}_{W_{ij}} = \frac{1}{D} \sum_{i} \Big(\partial_{W_{ij}} \mathcal{L}(x_i; W_{ij})\Big)^2.
\end{equation*}
In contrast, for the proposed method \ours, we perform SVD on $WX_{\mathrm{in}}$, where $X_{\mathrm{in}}$ is the stack of all activations, i.e., $X_{\mathrm{in}} = [X_1, \dots, X_n] \in \mathbb{R}^{d_{\mathrm{in}} \times n}$.
 
\subsection{Pruning}
\label{sec:related_pruning}
Our approach to attribution aligns closely with techniques used in neural network pruning. Our SNIP method~\cite{lee2019snip} resembles more closely with unstructured pruning techniques, which are designed to establish criteria based on weight magnitude, activations, or network gradients for removing individual weights from a network~\citep{han2015deep, molchanov2016pruning, frankle2018lottery,chen2020lottery,sanh2020movement, zhao2020masking, cao2021low, guo2020parameter}. These unstructured pruning methods have been adapted for use in large language models, as seen in Wanda~\citep{sun2023simple}, SparseGPT~\citep{frantar2023sparsegpt}.

Broadly speaking, the low-rank compression techniques are akin to 
structured pruning approaches, with a focus on identifying important structured subnetworks. In computer vision settings, it is common to remove channels or filters~\citep{li2017pruning, molchanov2016pruning,wen2016learning,he2017channel,luo2017thinet, liu2017learning} from convolutional neural networks. Structured pruning of language models involves removing heads, dimensions, or ranks~\citep{michel2019sixteen,wang2020structured,lagunas2021block, xia2022structured,ma2023llmpruner,zhang2023adaptive, zhang2023pruning, chen2023lorashear, xia2023sheared, ashkboos2024slicegpt}.

While pruning is commonly employed for model compression to decrease model sizes, our work adopts similar techniques to identify critical regions responsible for safety.

\section{Experimental Details}
\label{sec:app_exp_detail}

\paragraph{Compute configurations} All the experiments are done with four AMD EPYC 7J13 64-core CPUs and a single NVIDIA A100-80G GPU. During the experiments, we utilize vLLM~\cite{kwon2023efficient} for faster decoding. The typical GPU hours for different experiments are listed in \Cref{tab:gpu-hours}. 

\begin{table}[h]
\centering
\begin{tabular}{cccc}
\toprule
{\bf Model Name}                                  & {\bf Attribution unit}              & {\bf Experiment type}                     & {\bf GPU hours} \\
\midrule
\multirow{4}{*}{\llama}      & \multirow{2}{*}{Neuron} & Pruning Top/Least Safety Neurons    & $0.2$       \\
                                            &                         & Pruning with Set Difference         & $0.2$       \\
                                            \cmidrule{2-4}
                                            & \multirow{2}{*}{Rank}   & Removing Top/Least Safety Ranks     & $0.5$       \\
                                            &                         & Removing with Orthogonal Projection & $1.0$       \\
\midrule
\multirow{4}{*}{\llamalarge} & \multirow{2}{*}{Neuron} & Pruning Top/Least Safety Neurons    & $0.5$       \\
                                            &                         & Pruning with Set Difference         & $0.5$       \\
                                            \cmidrule{2-4}
                                            & \multirow{2}{*}{Rank}   & Removing Top/Least Safety Ranks     & $0.8$       \\
                                            &                         & Removing with Orthogonal Projection & $2.0$       \\
\bottomrule
\end{tabular}
\caption{Typical GPU hours of different experiment types.}
\label{tab:gpu-hours}
\end{table}

\paragraph{Details for pruning}
For all the methods in the paper, we adopt block-wise pruning as \citet{sun2023simple}, where we start from the first Transformer block in Llama. After pruning the 7 linear layers in the current block ($\mathrm{self\_attn.q}$, $\mathrm{self\_attn.k}$, $\mathrm{self\_attn.v}$, $\mathrm{self\_attn.o}$, $\mathrm{mlp.up}$, $\mathrm{mlp.gate}$, $\mathrm{mlp.down}$), we \textit{recompute} the outputs of the current block and continue to the next block.
 
For the neuron-level attribution, we use output-wise pruning following \citet{sun2023simple}, as the authors observed that pruning per output has better performance for language models. Specifically, after we obtain the score matrix $I(W)$, for a specific sparsity ratio $p \%$, we set $p \%$ of the weights to zero \textit{independently for each row} of the matrix $W$.

\paragraph{Collection of safety and utility dataset} \Cref{tab:data_info} provides more details for the safety and utility datasets we use in our experiments. 
During the experiment, we sample $128$ (prompt, response) pairs in computing the importance score or projection matrix. 
\begin{table}[!ht]
\centering
\begin{tabular}{cccc}
\toprule
{\bf Dataset Name} & {\bf Data sources}            & {\bf Number of (prompt, response) pairs}  & {\bf Average Sequence Length}\\
\midrule
Safety-full       & \AdvBench$_\textrm{attr}$ (prompt) & $7,220 $  &         $175.68$                     \\
Safety-short       & \AdvBench$_\textrm{attr}$ (prompt) & $7,220 $   &         $48.78$              \\
Utility      & \Alpacacleaned         & $45,874 $                  &        $118.50$   \\
\bottomrule
\end{tabular}
\caption{The basic information of datasets used for attribution in our experiments.}
\label{tab:data_info}
\end{table}

\paragraph{Repeat times} To mitigate the potential variability introduced by random seeds, we repeat our experiments on \Cref{sec:exp_removal_bottom} three times with different random seeds. In our figure, we plot the mean value $\mu$ for each data point. To represent variability, we shade the area between $[\mu-\sigma, \mu+\sigma]$, where $\sigma$ denotes the standard deviation corresponding to each point.

\paragraph{The probing baseline} We adopt a similar probing setup used in \citet{li2023inference} for identifying safety-critical neurons in this work.  Specifically, we feed the model with all $420$ harmful instructions from AdvBench$_\textrm{attr}$, as well as $420$ harmless instructions randomly sampled from the utility dataset. We collect the activation outputs of every internal attention head for these instructions. This collected data is then split into two sets, with a $5:2$ ratio for the training split and the validation split, respectively. For each attention head, we train a linear classifier on the training split using its activation inputs to distinguish between activations resulting from harmful and harmless instructions. We then evaluate the accuracy of the classifier on the validation split, which indicates the relevance of the attention head in distinguishing between harmful and harmless instructions.

\paragraph{The adversarial suffixes} We run the GCG attack \cite{zou2023universal} for $500$ iterations,
with adversarial sting initiated as ``\texttt{!!!!!!!!!!!!!!!!!!!!}". For optimization, we use a batch size of $512$, top-$k$ as $256$, with a joint optimization over Llama2 family~\cite{touvron2023llama-2} and Vicuna~\cite{vicuna2023} models\footnote{According to \citet{zou2023universal}, incorporating a broader range of models during training enhances the effectiveness of attacks.}, with their system prompts removed, for three independent trails. We then identify the top three suffixes with the highest attack success rates on \AdvBench, and use them in our evaluation. For ethical reasons, we refrain from disclosing these suffixes to prevent potential misuse. %

\paragraph{The adversarial decoding} For evaluating ASR$_\textrm{Adv-Decoding}$, we configure the sampling temperature to $1.0$ when generating responses from \AdvBench $_\textrm{eval}$. For each harmful prompt in \AdvBench, we perform sampling $5$ times. An attack is considered successful if at least one of the sampled responses is deemed harmful.

\paragraph{The details of the zero-shot tasks in evaluating utility} 
\begin{enumerate}
    \item ARC-Challenge:
    \begin{enumerate}
        \item Downstream Task: Science Question Answering.
        \item Description: The ARC-Challenge metric evaluates the performance of models on the ARC-Challenge subset of the AI2 Reasoning Challenge dataset, which consists of grade-school science questions that require complex reasoning and understanding of scientific concepts \footnote{More details are available at \url{https://allenai.org/data/arc}.}.
    \end{enumerate}
    \item HellaSWAG:
    \begin{enumerate}
        \item Downstream Task: Commonsense Reasoning
        \item Description: HellaSWAG is a dataset for evaluating commonsense reasoning in AI systems. It consists of context and multiple-choice endings, where the task is to predict the most plausible ending. The dataset is designed to test a model's ability to reason about everyday scenarios\footnote{More details are available at \url{https://huggingface.co/datasets/Rowan/hellaswag}.}.
    \end{enumerate}
    \item OpenBookQA:
    \begin{enumerate}
        \item Downstream Task: Open-Book Question Answering
        \item Description: OpenBookQA aims to promote research in advanced question-answering, probing a deeper understanding of both the topic (with salient facts summarized as an open book, also provided with the dataset) and the language it is expressed in. In particular, it contains questions that require multi-step reasoning, use of additional common and commonsense knowledge, and rich text comprehension. OpenBookQA is a new kind of question-answering dataset modeled after open book exams for assessing human understanding of a subject\footnote{More details are available at \url{https://huggingface.co/datasets/allenai/openbookqa}.}.
    \end{enumerate}
    \item WiNoGrande:
    \begin{enumerate}
        \item Downstream Task: Commonsense Reasoning
        Description: WiNoGrande is a dataset for evaluating large-scale commonsense reasoning. It is inspired by Winograd Schema Challenge (Levesque, Davis, and Morgenstern 2011), but adjusted to improve the scale and robustness against the dataset-specific bias. Formulated as a fill-in-a-blank task with binary options, the goal is to choose the right option for a given sentence which requires commonsense reasoning\footnote{More details are available at \url{https://huggingface.co/datasets/winogrande}.}.
    \end{enumerate}
    \item BoolQ:
    \begin{enumerate}
        \item Downstream Task: Yes/No Question Answering
        \item Description: BoolQ is a question answering dataset for yes/no questions containing 15942 examples. These questions are naturally occurring ---they are generated in unprompted and unconstrained settings. Each example is a triplet of (question, passage, answer), with the title of the page as optional additional context. The text-pair classification setup is similar to existing natural language inference tasks\footnote{More details are available at \url{https://github.com/google-research-datasets/boolean-questions}.}.
    \end{enumerate}
    \item RTE (Recognizing Textual Entailment):
    \begin{enumerate}
        \item Downstream Task: Textual Entailment
        \item Description: RTE is a task that involves determining whether a given hypothesis can logically be inferred from a given premise. The dataset consists of pairs of sentences, and the task is to classify each pair as either "entailment" (the hypothesis follows from the premise) or "not entailment" (the hypothesis does not follow from the premise) \footnote{More details are available at \url{https://huggingface.co/datasets/nyu-mll/glue\#rte}.}.
    \end{enumerate}

\end{enumerate}

\section{More Experimental Results}

\subsection{More results in \llamalarge}\label{sec:13b_results}

We plot the results of removing the most safety-critical neurons and ranks on \Cref{fig:exp_main_13b} and the results of removing the least safety-critical neurons and ranks on \Cref{fig:ASR_vs_sparsity_13b}.

\begin{figure*}[!ht]
\begin{minipage}[b]{\linewidth}
\centering
\includegraphics[width=0.9\linewidth]{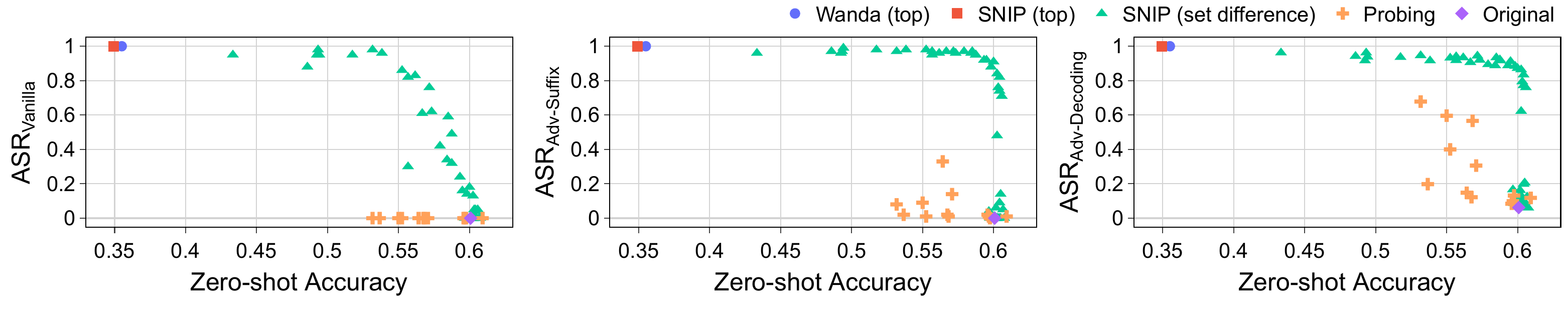}
\subcaption{Removing Safety-Critical \textbf{Neurons}}
\label{fig:neuron_13b}
\end{minipage}
\begin{minipage}[b]{\linewidth}
\centering
\includegraphics[width=0.9\linewidth]{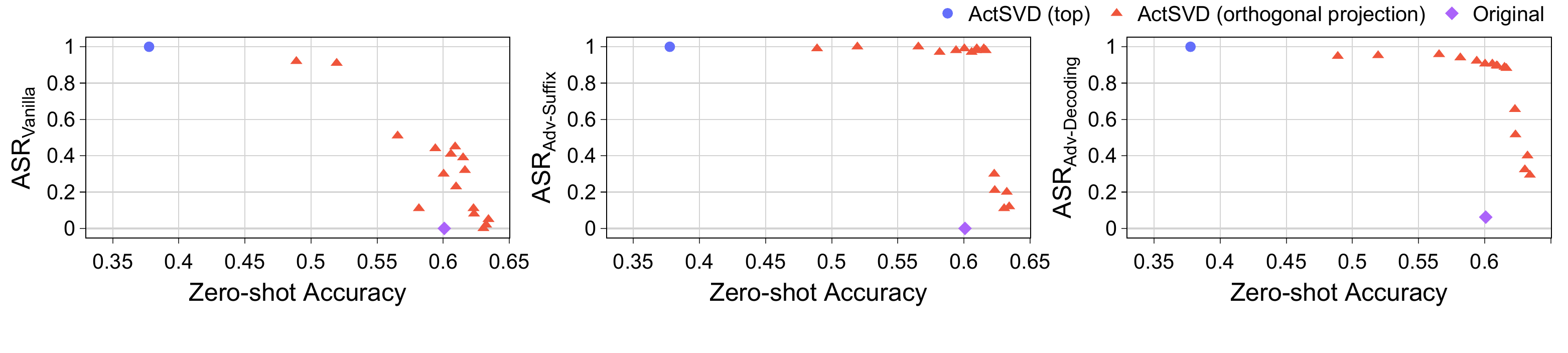}
\vspace{-3mm}
\subcaption{Removing Safety-Critical \textbf{Ranks} 
}
\label{fig:rank_13b}
\end{minipage}
\vspace{-5mm}
\caption{ASR and accuracy after removing safety-critical neurons in \llamalarge identified by \textbf{(a)} different methods in \Cref{sec:exp_setup_neurons} with \textbf{sparsity constraint $\mathbf{<3\%}$}; \textbf{(b)} different methods in \Cref{sec:exp_setup_ranks} with \textbf{{ranks of the weight updates $\leq \mathbf{120}$}} (out of $5120$).
 Among all methods, disentangling safety from utility (set difference for neurons and orthogonal projection for ranks) mostly effectively identify the safety-critical regions, with safety severely compromised while utility retains.
}
\label{fig:exp_main_13b}
\end{figure*}

In accordance with the results on \llama (see \Cref{sec:exp_results_main}), from \Cref{fig:exp_main_13b}, we observe similar results on \llamalarge: 
\begin{itemize}
    \item Removing safety-critical neurons using set difference, or removing safety-critical ranks using orthogonal projection, is effective in destroying the model's safety while preserving utility.
    \item Removing top safety neurons or ranks severely hurts utility.
    \item Set difference with SNIP score consistently outperforms the attention head probing baseline.
\end{itemize}

However, from \Cref{fig:ASR_vs_sparsity_13b}, we observe different curves from the results for \llama. 
\begin{itemize}
    \item The exclusion of neurons using SNIP and \ours deemed least critical for safety slightly enhances robustness against adversarial decoding attacks, i.e., when actual sparsity $>0.4$ \& removed rank $<300$. 
    \item In contrast, removing neurons according to the least Wanda scores hurts the adversarial robustness. 
    \item Different from \llama, we see that the original \llamalarge has zero ASR$_\textrm{Adv-Suffix}$. Removing less than $45\%$ of neurons or less than $750$ ranks that are least critical for safety \textit{maintains} the robustness against adversarial suffixes at a nearly $0$ ASR. 
    One potential reason behinds the phenomenon is that the adversarial suffixes are obtained using 7B models and they cannot transfer to \llamalarge. It may be possible that the trend between the \llamalarge and \llama models becomes more aligned with optimized suffixes.
\end{itemize}

\begin{figure*}[t]
    \begin{minipage}{0.49\linewidth}
    \centering
    \includegraphics[width=\linewidth]{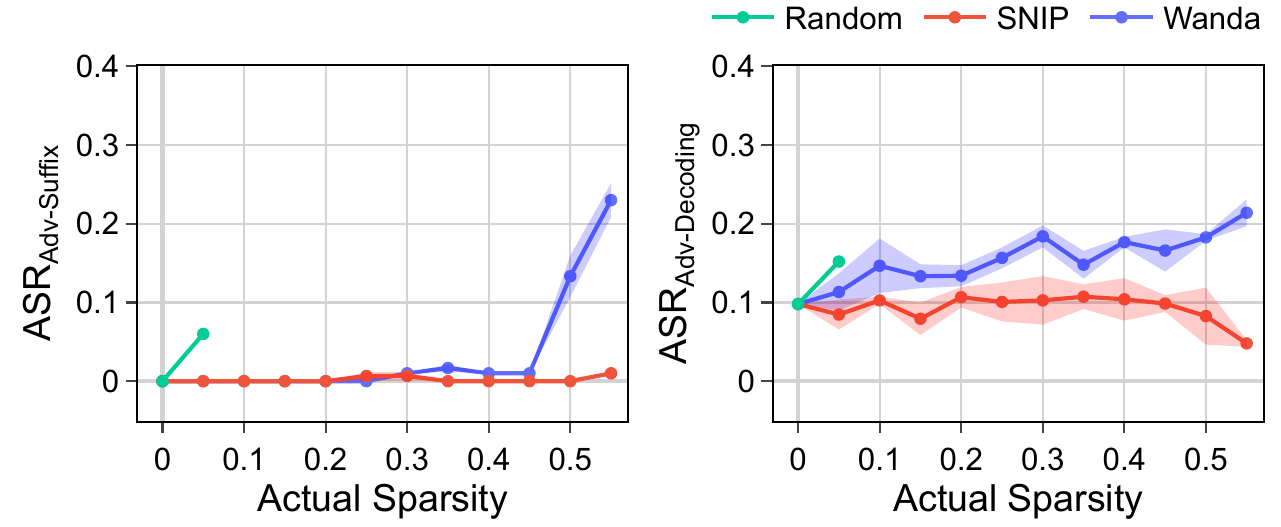}
    \subcaption{Removing the least Safety-Critical \textbf{Neurons}}
    \label{fig:ASR_vs_sparsity_neuron_13b}
    \end{minipage}
    \hfill
    \begin{minipage}{0.49\linewidth}
    \centering
    \includegraphics[width=\linewidth]{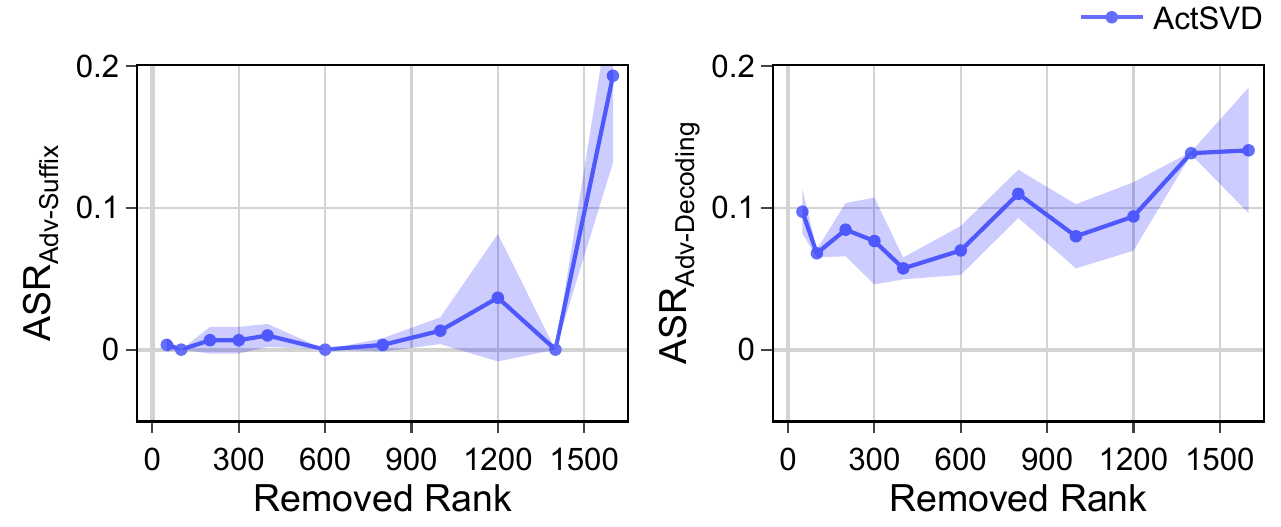}
    \subcaption{Removing the least Safety-Critical \textbf{Ranks}}
    \label{fig:ASR_vs_sparsity_rank_13b}
    \end{minipage}
    \caption{Impact on ASR under adversaries in \llamalarge when: (a) Removing neurons with  \textit{the lowest} safety importance scores and (b) Removing \textit{the least} safety-critical ranks identified by \ours. Both the importance score and safety-critical ranks are calculated based on safety short. All plotted points have reasonable utility with accuracy $>0.5$. 
    The exclusion of neurons (using SNIP) and ranks deemed least critical for safety slightly enhances robustness against adversarial decoding attacks (when actual sparsity $<0.4$ \& removed rank $<300$), and it maintains the robustness at a nearly $0$ ASR against adversarial suffixes.
    }
    \label{fig:ASR_vs_sparsity_13b}
\end{figure*}

\subsection{Ablation Study between safety-full dataset and safety-short dataset}\label{sec:safety_vs_safety_short}

\paragraph{Performance of set difference} As shown in \Cref{fig:ASR_vs_acc_full_vs_short}, the trends in ASR versus accuracy for both the safety-full and safety-short attribution datasets are similar. This observation implies that utilizing judgment-only data is as effective as using the full response for identifying safety-critical neurons.

\begin{figure}[!ht]
    \centering
    \includegraphics[width=0.95\linewidth]{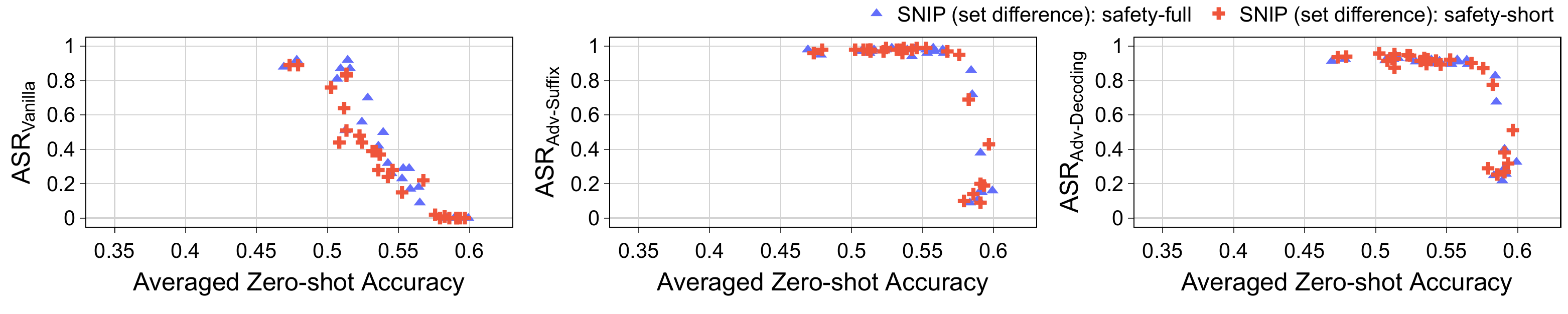}
    \vspace{-5mm}
    \caption{The relationship between ASR and Averaged Zero-shot Accuracy with set difference pruning methods on \llama, using safety and safety-short datasets. The model's performance is very similar in safety and safety-short.}
    \label{fig:ASR_vs_acc_full_vs_short}
\end{figure}

\paragraph{Performance of pruning the least safety-critical region} As shown in \Cref{fig:ASR_vs_sparsity_short_vs_full}, when pruning the least safety-critical region, compared to safety-full dataset, using safety-short dataset exhibits a more significant change in both ASR$_\textrm{Adv-Suffix}$ and ASR$_\textrm{Adv-Decoding}$. We also observe that ASR$_\textrm{Vanilla}$ remains close to zero for actual sparsity levels between 0 and 0.55. Therefore, we only report results for ASR$_\textrm{Adv-Suffix}$ and ASR$_\textrm{Adv-Decoding}$, with safety-short in \Cref{sec:exp_removal_bottom}.

\begin{figure}[!ht]
    \centering
  \includegraphics[width=0.6\textwidth]{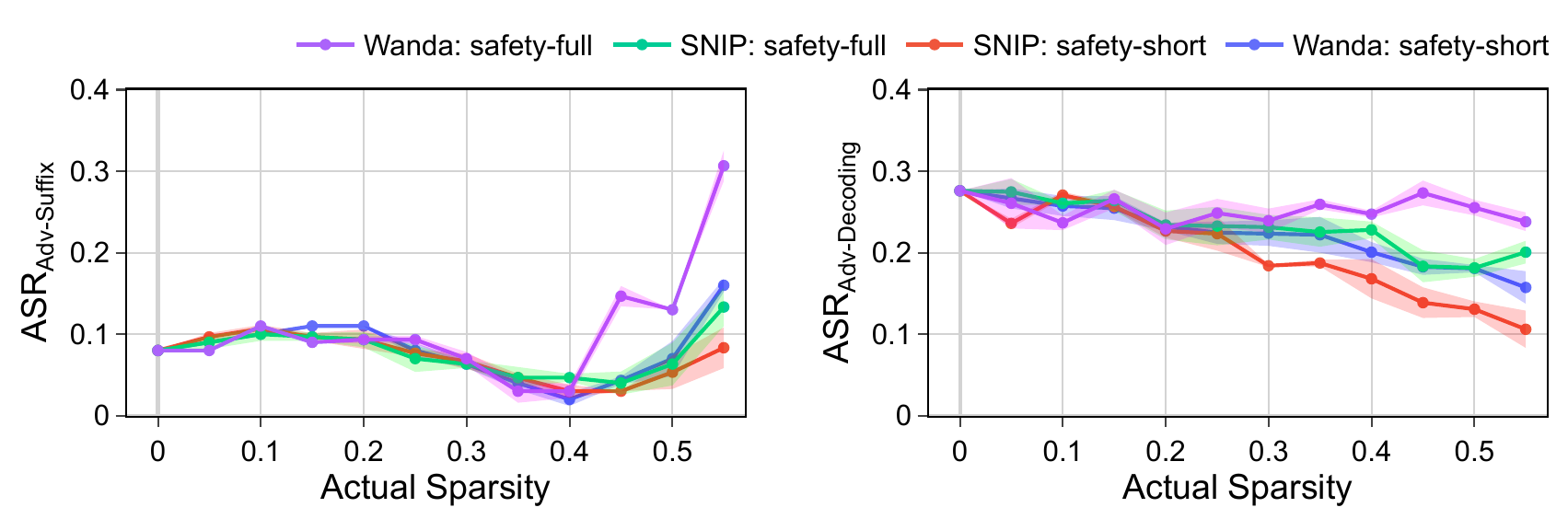} 
  \vspace{-5mm}
  \caption{Comparison between using safety-full and safety-short when pruning the least safety-critical neurons on \llama. Compared to safety-full dataset, using safety-short dataset exhibits a more significant change.}
  \label{fig:ASR_vs_sparsity_short_vs_full}
\end{figure}

\subsection{More Results for Disentanglement Methods}
\label{sec:pq_comb}

\paragraph{More results for $(p, q)$ combinations in set difference.} We conduct a comprehensive study exploring the search space for $(p, q)$ values ranging between $0.1$ and $90$. Complementing \Cref{fig:neuron_7b} and \Cref{fig:neuron_13b}, \Cref{tab:pq_comb} presents the top $(p, q)$ combinations along with the utility and safety measures for the resulting models on \llama and \llamalarge. In scenarios where the actual sparsity is less than $1\%$, the model maintains a low ASR$_\textrm{Vanilla}$, typically under $0.3$. However, its ASR$_\textrm{Adv-Suffix}$ and ASR$_\textrm{Adv-Decoding}$ nearly reach to $1$. In contrast, when the actual sparsity $\in (1\%, 3\%)$., the model approaches a value close to $1$ for all three ASR variants. Notably, across all cases outlined in \Cref{tab:pq_comb}, the model consistently maintains utility, with an average accuracy greater than $0.5$. These findings indicate that the optimal range for the $(p, q)$ parameters lies between $3$ and $9$, especially when the values of $p$ and $q$ are similar.

\begin{table}[!ht]
\begin{minipage}[b]{\linewidth}
\centering
\small
\begin{tabular}{ccccccc}
\toprule
$p$ & $q$ & \textbf{Actual Sparsity} & \textbf{ASR$_\textrm{Vanilla}$} & \textbf{ASR$_\textrm{Adv-Suffix}$} &  \textbf{ASR$_\textrm{Adv-Decoding}$} & \textbf{Averaged Accuracy} \\
\midrule
\multicolumn{7}{c}{\bf Actual Sparsity  $ < 1\%$} \\
\midrule
$1$ & $1$ & $0.63$\%          & $0.10$   & $0.97$ & $0.86$   & $0.54$        \\
$2$ & $1$ & $0.46$\%          & $0.06$   & $0.94$ & $0.84$   & $0.55$       \\
$4$ & $2$ & $0.78$\%          & $0.09$   & $0.97$ & $0.88$   & $0.56$        \\
$7$ & $3$ & $0.86$\%          & $0.06$   & $0.97$ & $0.89$   & $0.58$       \\
$3$ & $2$ & $0.94$\%          & $0.21$   & $0.97$ & $0.89$   & $0.55$        \\
\midrule
\multicolumn{7}{c}{\bf Actual Sparsity $\in (1\%, 3\%)$} \\
\midrule
$4$ & $4$ & $2.03$\%          & $0.81$   & $0.97$ & $0.92$   & $0.51$        \\
$5$ & $5$ & $2.41$\%          & $0.92$   & $0.97$ & $0.95$   & $0.51$        \\
$6$ & $5$ & $2.10$\%          & $0.70$   & $0.99$ & $0.91$   & $0.53$        \\
$6$ & $6$ & $2.75$\%          & $0.87$   & $0.97$ & $0.93$   & $0.51 $       \\
$9$ & $8$ & $2.99$\%          & $0.87$   & $0.98$ & $0.93$   & $0.52$       \\
\bottomrule
\end{tabular}
\subcaption{\llama}
\label{tab:neuron_sub_7b}
\end{minipage}
\begin{minipage}[b]{\linewidth}
\small
\centering
\begin{tabular}{ccccccc}
\toprule
$p$ & $q$ & \textbf{Actual Sparsity} & \textbf{ASR$_\textrm{Vanilla}$} & \textbf{ASR$_\textrm{Adv-Suffix}$} &  \textbf{ASR$_\textrm{Adv-Decoding}$} & \textbf{Averaged Accuracy} \\
\midrule
\multicolumn{7}{c}{\bf Actual Sparsity  $ < 1\%$}
\\
\midrule
$1$ & $1$ & $0.66$\%          & $0.30$   & $0.82$ & $0.83$   & $0.60$        \\
$5$ & $2$ & $0.72$\%          & $0.18$   & $0.91$ & $0.87$   & $0.60$        \\
$7$ & $2$ & $0.53$\%          & $0.13$   & $0.84$ & $0.86$   & $0.60$        \\
$7$ & $3$ & $0.93$\%          & $0.16$   & $0.92$ & $0.91$   & $0.60$       \\
$8$ & $3$ & $0.82$\%          & $0.14$   & $0.88$ & $0.88$   & $0.60$       \\
\midrule
\multicolumn{7}{c}{\bf Actual Sparsity $\in (1\%, 3\%)$} \\
\midrule
$3$ & $3$ & $1.72$\%          & $0.96$   & $0.98$ & $0.92$   & $0.54$        \\
$4$ & $3$ & $1.46$\%          & $0.86$   & $0.98$ & $0.93$   & $0.55$        \\
$4$ & $4$ & $2.15$\%          & $0.98$   & $0.97$ & $0.95$   & $0.53$        \\
$6$ & $6$ & $2.91$\%          & $0.95$   & $0.98$ & $0.94$   & $0.52$       \\
$9$ & $7$ & $2.59$\%          & $0.83$   & $0.96$ & $0.93$   & $0.56$       \\
\bottomrule
\end{tabular}
\subcaption{\llamalarge}
\label{tab:neuron_sub_13b}
\end{minipage}
\caption{Performance of \llama (a) and \llamalarge (b) with safety-critical neurons removed, identified through set difference between top-$q\%$ safety and top-$p\%$ utility neurons, across various $(p, q)$ combinations. The ideal range for $(p, q)$ is $[3, 9]$, especially when $p$ and $q$ are closely matched.}
\label{tab:pq_comb}

\end{table}

\label{sec:app_exp_results}

\paragraph{More results for $(r^u, r^s)$ combinations in orthogonal projection.}

Note that the ranks of the weight matrices of the linear layers are $R=4096$ for \llama and $R=5120$ for \llamalarge. 
We perform a grid search for the parameters $r^u$ and $r^s$, spanning a range from $50$ to $4000$ for \llama and from $1200$ to $5000$ for \llamalarge. As an extension to \Cref{fig:rank_7b} and 
\Cref{fig:rank_13b}, \Cref{tab:ru_rs_comb} presents the top five combinations of $r^u$ and $r^s$ along with the utility and safety metrics for the models tested on \llama and \llamalarge model. The results indicate that setting $r^s$ close to $R$, especially when $r^u$ closes to $r^s$, proves to be particularly effective.

\begin{table}[!ht]
\begin{minipage}[b]{\linewidth}
\centering
\small
\begin{tabular}{ccccccc}
\toprule
$r^u$ & $r^s$ & $\min(r^u, R-r^s)$ & \textbf{ASR$_\textrm{Vanilla}$} & \textbf{ASR$_\textrm{Adv-Suffix}$} &  \textbf{ASR$_\textrm{Adv-Decoding}$} & \textbf{Averaged Accuracy} \\
\midrule
$3450$ & $4000$ & $96$          & $0.67$   & $1.00$ & $0.88$   & $0.59$        \\
$3550$ & $4000$ & $96$          & $0.68$   & $0.99$ & $0.90$   & $0.59$        \\
$3950$ & $4090$ & $6$         & $0.71$   & $0.97$ & $0.91$   & $0.59 $       \\
$4000$ & $4090$ & $6$         & $0.71$   & $0.97$ & $0.92$   & $0.58$        \\
$4080$ & $4090$ & $6$          & $0.65$   & $0.98$ & $0.94$   & $0.57$       \\

\bottomrule
\end{tabular}
\subcaption{\llama ($R=4096$)}
\label{tab:rank_sub_7b}
\end{minipage}
\begin{minipage}[b]{\linewidth}
\centering
\small
\begin{tabular}{ccccccc}
\toprule
$r^u$ & $r^s$ & $\min(r^u, R-r^s)$ & \textbf{ASR$_\textrm{Vanilla}$} & \textbf{ASR$_\textrm{Adv-Suffix}$} &  \textbf{ASR$_\textrm{Adv-Decoding}$} & \textbf{Averaged Accuracy} \\
\midrule
$3450$ & $5000$ & $120$          & $0.32$   & $0.98$ & $0.88$   & $0.62$        \\
$3600$ & $5000$ & $120$         & $0.41$   & $0.97$ & $0.91$   & $0.61 $       \\
$3900$ & $5000$ & $120$         & $0.30$   & $0.99$ & $0.91$   & $0.60$        \\
$3750$ & $5000$ & $120$          & $0.39$   & $0.99$ & $0.89$   & $0.62$       \\
$4400$ & $5000$ & $120$          & $0.91$   & $1.00$ & $0.95$   & $0.52$       \\

\bottomrule
\end{tabular}
\subcaption{\llamalarge ($R=5120$)}
\label{tab:rank_sub_13b}
\end{minipage}

\caption{Performance of  \llama (a) and \llamalarge (b) model with safety-critical ranks removed by doing orthogonal projection between utility projection matrix $\Pi^u$ and safety projection matrix $\Pi^s$, across various $(r^u, r^s)$ combinations. Setting $r^s$ close to $R$, especially when $r^u$ closes to $r^s$, proves to be particularly effective.}
\label{tab:ru_rs_comb}

\end{table}

\subsection{Probing Accuracy Distributions}
\label{sec:app_probing}

We also analyze the accuracy of linear probers trained on all $1024$ attention heads from \llama (\Cref{fig:probe-sub1}) and $1600$ attention heads from \llamalarge (\Cref{fig:probe-sub2}). The results show that around half of the attention heads achieve very high probing accuracy (i.e., $>0.95$) in distinguishing between harmful and harmless instructions. Notably, even the attention heads with the lowest probing accuracy show significant effectiveness -- $0.78$ for \llama and $0.74$ for \llamalarge. Additionally, transformer blocks located in the middle typically demonstrate higher probing accuracy compared to those at the beginning or end.

\begin{figure}[!htbp]
    \centering
    \begin{minipage}[b]{0.48\textwidth}
        \includegraphics[width=\textwidth]{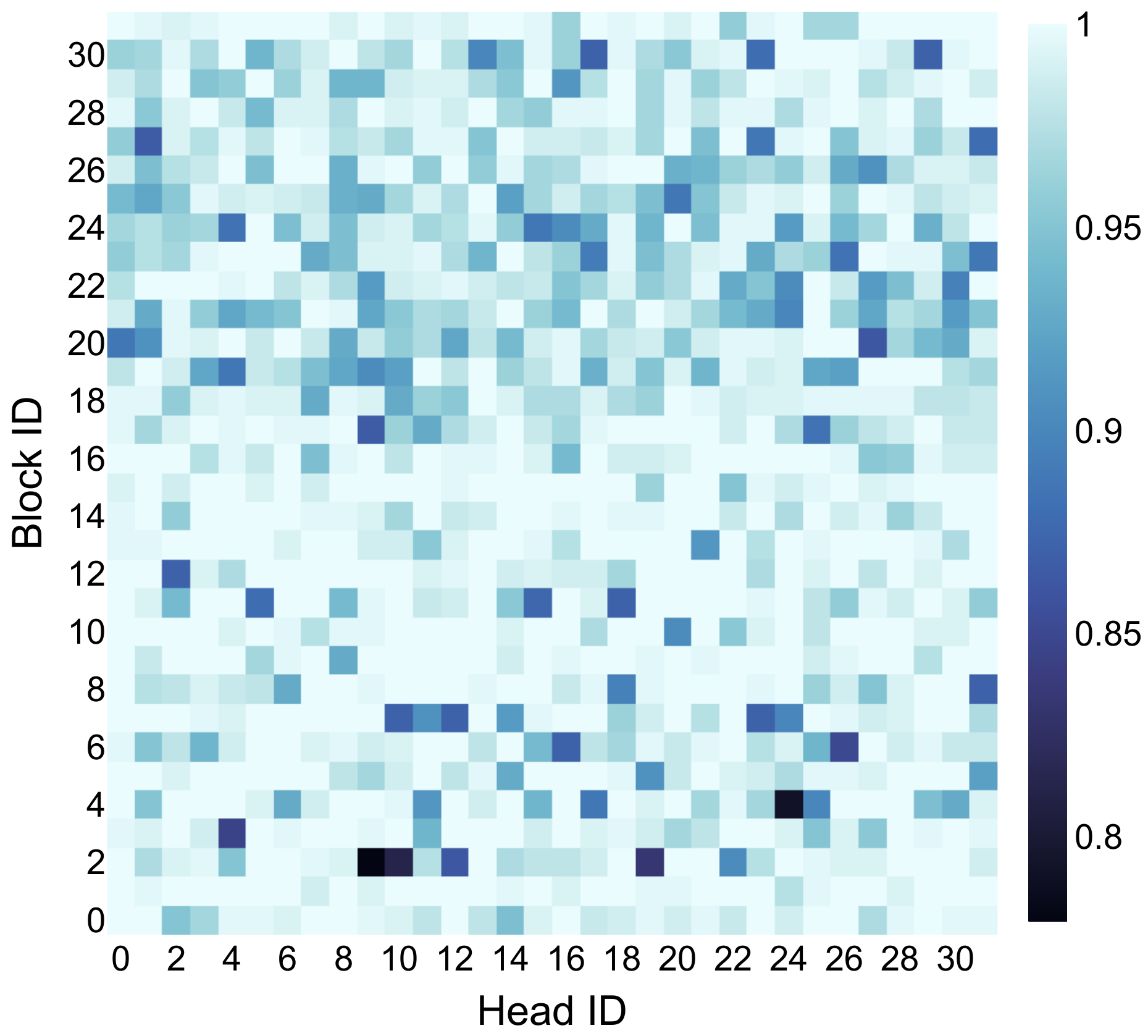}
        \subcaption{\llama}
        \label{fig:probe-sub1}
    \end{minipage}
    \begin{minipage}[b]{0.48\textwidth}
        \includegraphics[width=\textwidth]{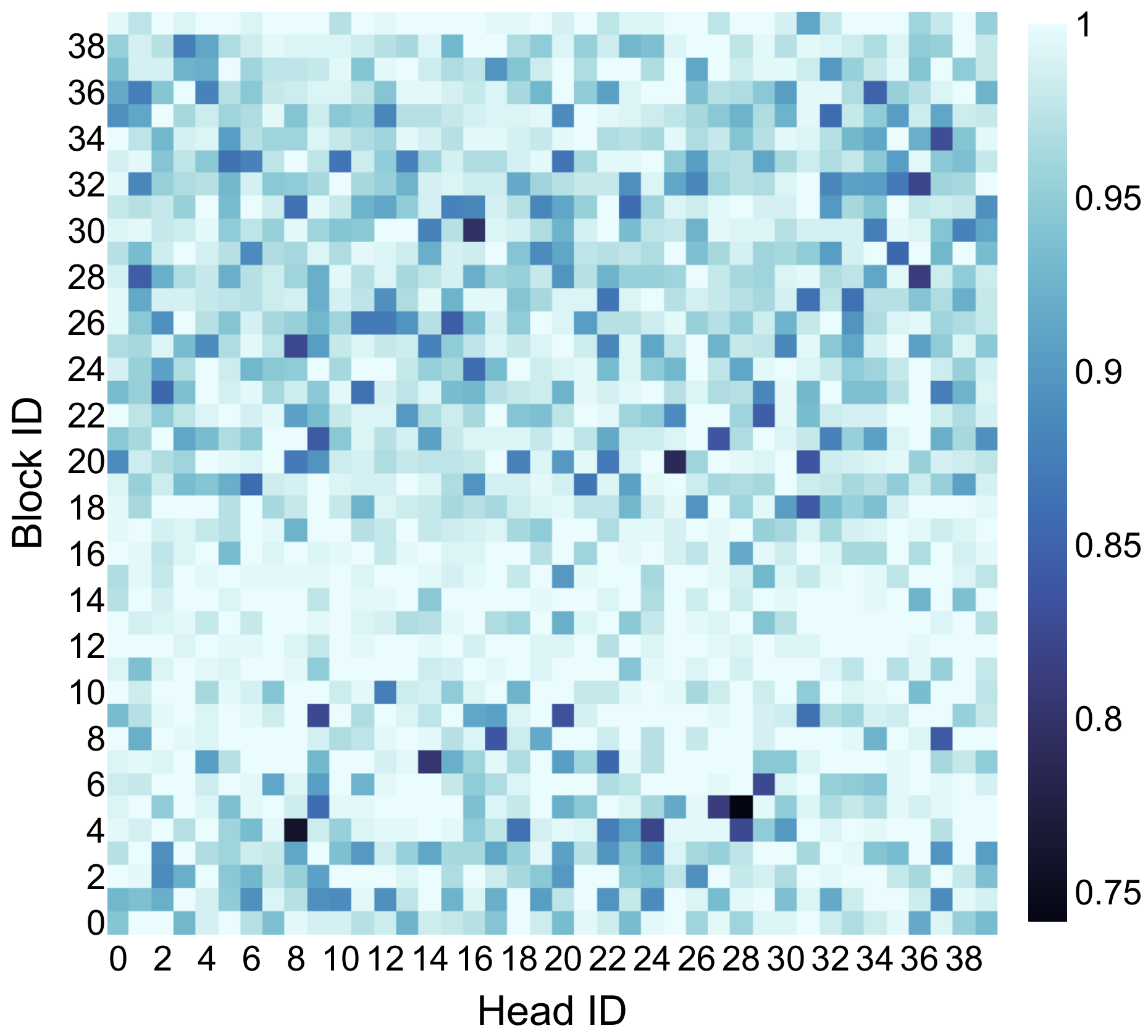}
        \subcaption{\llamalarge}
        \label{fig:probe-sub2}
    \end{minipage}
    \vspace{-3mm}
    \caption{The probing accuracy distribution across different attention heads inside \llama and \llamalarge. Around half of the attention heads achieve very high probing accuracy (i.e., $>0.95$) in distinguishing between harmful and harmless instructions.}
    \label{fig:test}
\end{figure}

This pattern of high accuracy suggests that making safety judgments at the level of individual attention heads is relatively straightforward due to their effective representational capacity. Therefore, our study's focus on finer granularities, such as neurons or ranks, is essential for the precise localization of safety-critical regions.

\section{Proof of the Optimality of \ours}
\label{sec:proof}
\newtheorem{lemma}{Lemma}

\begin{lemma}
Let $X_{\mathrm{in}} \in \mathbb{R}^{d_\mathrm{in} \times n}$. Let $\widehat{W}$ be the solution to the following rank-constrained approximation problem.
\begin{equation}
    \widehat{W} = \argmin_{\rank {\widehat{W}} \leq r} \| W X_{\mathrm{in}} - \widehat{W} X_{\mathrm{in}} \|_F^2. \label{eq:obj}
\end{equation}
    
Let $U S V^\top$ be the rank-$r$ SVD on $W  X_{\mathrm{in}} \in \mathbb{R}^ {d_\mathrm{out} \times n}$:
\[
    U S V^\top  \approx WX_{\mathrm{in}}, 
\]
where $U \in \mathbb{R}^{d_\mathrm{out}\times r}$ is the orthogonal matrix corresponding to the top $r$ left singular vectors. The minimizer to \Cref{eq:obj} is given by 
\[
    \widehat{W} = UU^\top W.
\]
\end{lemma}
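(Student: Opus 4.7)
\medskip

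\noindent\textbf{Proof plan.} The plan is to reduce the rank-constrained weight-approximation problem to a standard Eckart--Young--Mirsky statement about the best low-rank approximation of a fixed matrix. Write $A = W X_{\mathrm{in}} \in \mathbb{R}^{d_{\mathrm{out}} \times n}$ and, for any candidate $\widehat{W}$, set $B = \widehat{W} X_{\mathrm{in}}$. Then the objective in \eqref{eq:obj} becomes $\|A - B\|_F^2$, and the rank constraint on $\widehat{W}$ forces $\rank(B) \leq \rank(\widehat{W}) \leq r$. So the problem is at least as easy as
\[
  \min_{\rank(B) \leq r} \|A - B\|_F^2,
\]
whose minimum, by Eckart--Young--Mirsky, equals $\sum_{i > r} \sigma_i(A)^2$ and is attained by the rank-$r$ truncation $A_r = U S V^\top$ of the SVD of $A$.

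\medskip

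\noindent Next I would establish a matching upper bound by exhibiting a feasible $\widehat{W}$ that achieves $\widehat{W} X_{\mathrm{in}} = A_r$. The natural candidate is $\widehat{W} = U U^\top W$. Two points need to be checked: (i) \emph{feasibility}, i.e., $\rank(\widehat{W}) \leq r$, which follows immediately from the factorization $\widehat{W} = U \bigl(U^\top W\bigr)$, a product of a $d_{\mathrm{out}} \times r$ and an $r \times d_{\mathrm{in}}$ matrix; and (ii) \emph{optimality of the value}, i.e., $\widehat{W} X_{\mathrm{in}} = A_r$. For (ii), write the full SVD $A = \sum_{i} \sigma_i u_i v_i^\top$, note that $U U^\top$ is the orthogonal projection onto $\mathrm{span}(u_1, \dots, u_r)$, and apply it to $A$ to get
\[
  U U^\top A \;=\; \sum_{i=1}^{R} \sigma_i (U U^\top u_i) v_i^\top \;=\; \sum_{i=1}^{r} \sigma_i u_i v_i^\top \;=\; A_r,
\]
so $\widehat{W} X_{\mathrm{in}} = U U^\top W X_{\mathrm{in}} = U U^\top A = A_r$.

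\medskip

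\noindent Combining the two directions closes the argument: the lower bound from Eckart--Young--Mirsky matches the value achieved by $\widehat{W} = U U^\top W$, so this choice is a minimizer of \eqref{eq:obj}. I do not anticipate a substantive obstacle; the only subtle point is recognizing that any $B$ of the form $\widehat{W} X_{\mathrm{in}}$ with $\rank(\widehat{W}) \leq r$ is automatically of rank at most $r$, and conversely the specific rank-$r$ target $A_r$ can be realized in this form because its column space lies in $\mathrm{span}(U)$. (Uniqueness, not asserted in the lemma, would require the top $r$ singular values of $A$ to be strictly separated from the rest, but only existence of the given minimizer is needed here.)
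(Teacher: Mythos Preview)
Your proposal is correct and follows essentially the same route as the paper: both reduce \eqref{eq:obj} to the Eckart--Young--Mirsky problem on $A = W X_{\mathrm{in}}$, then verify that $\widehat{W} = U U^\top W$ is feasible (rank at most $r$) and attains the optimal value because $U U^\top A = A_r$. Your write-up is in fact a bit more explicit than the paper's about the lower-bound/upper-bound structure and the containment of feasible sets, but the argument is the same.
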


\begin{proof}
Denote $Z = WX_{\mathrm{in}} $. By Eckart–Young–Mirsky theorem~\citep{eckart1936approximation}, we know that the SVD  $\widehat{Z} = U S V^\top  \approx Z$ is the best rank-$r$ approximation to $Z$, where $U \in \mathbb{R}^{d_\mathrm{out} \times r}$, $S = \mathrm{diag}(S_1,\dots, S_r)$, $V \in \mathbb{R}^{n \times r}$.  Furthermore, we have 
\[
    \widehat{Z} = UU^\top Z.
\]
Plugging in $Z = WX_{\mathrm{in}} $, we have 
\[
  \widehat{Z} =   UU^\top  W X_{\mathrm{in}}.
\]
Recall that we set $\widehat{W} = UU^\top  W$. We see that 
\[
    \| \widehat{Z} - Z \|_F^2 \text{ is minimized } \Rightarrow \| \widehat{W}X_{\mathrm{in}} - WX_{\mathrm{in}} \|_F^2 \text{ is minimized. } 
\]
Furthermore, as $UU^\top$ is a rank-$r$ projection matrix, we have $\rank(\widehat{W}) \leq r$. Therefore, $\widehat{W}$ is the optimal solution to the rank-constrained minimization problem~(\Cref{eq:obj}). The same reasoning is used in \citet{hsu2022language}.
\end{proof}

\end{document}